\title{Learning Bounds for Risk-sensitive Learning}
\author{
Jaeho Lee$^\dagger$\thanks{To be corresponded with: \texttt{\href{mailto:jaeho-lee@kaist.ac.kr}{jaeho-lee@kaist.ac.kr}}} \qquad Sejun Park$^\ddagger$ \qquad Jinwoo Shin$^{\ddagger\dagger}$\\
Korea Advanced Institute of Science and Technology (KAIST)\\
$^\dagger$ School of Electrical Engineering,\:\: $^\ddagger$ Graduate School of AI
}
\begin{document}
\maketitle

\begin{abstract}
In risk-sensitive learning, one aims to find a hypothesis that minimizes a risk-averse (or risk-seeking) measure of loss, instead of the standard expected loss. In this paper, we propose to study the generalization properties of risk-sensitive learning schemes whose optimand is described via optimized certainty equivalents (OCE): our general scheme can handle various known risks, e.g., the entropic risk, mean-variance, and conditional value-at-risk, as special cases. We provide two learning bounds on the performance of empirical OCE minimizer. The first result gives an OCE guarantee based on the Rademacher average of the hypothesis space, which generalizes and improves existing results on the expected loss and the conditional value-at-risk. The second result, based on a novel variance-based characterization of OCE, gives an expected loss guarantee with a suppressed dependence on the smoothness of the selected OCE. Finally, we demonstrate the practical implications of the proposed bounds via exploratory experiments on neural networks.
\end{abstract}

\section{Introduction} \label{sec:intro}
The systematic minimization of the quantifiable uncertainty, or \textit{risk} \cite{knight21}, is one of the core objectives in all disciplines involving decision-making, e.g., economics and finance. Within machine learning contexts, strategies for \textit{risk-aversion} have been most actively studied under sequential decision-making and reinforcement learning frameworks \cite{howard72,cardoso19}, giving birth to a number of algorithms based on Markov decision processes (MDPs) and multi-armed bandits. In those works, various risk-averse measures of loss have been used as a minimization objective, instead of the risk-neutral expected loss; popular risk measures include entropic risk \cite{howard72,borkar01,borkar02}, mean-variance \cite{sobel82,filar89,mannor13}, and a slightly more modern alternative known as conditional value-at-risk (CVaR \cite{galichet13,chow14,tamar15}). Yet, with growing interest to the societal impacts of machine intelligence, the importance of risk-aversion under \textit{non-sequential} scenarios has also been spotlighted recently. For instance, Williamson and Menon \cite{williamson19} give an axiomatic characterization of the fairness risk measures, and propose a convex fairness-aware objective based on CVaR. Also, Curi et al. \cite{curi19} empirically demonstrate the effectiveness of their CVaR minimization algorithm to account for the covariate shift in the data-generating distribution.

The advantage of \textit{risk-sensitive} (either \textit{risk-seeking} or \textit{risk-averse}) objectives in machine learning, however, is not limited to tasks involving social considerations. Indeed, there exists a rich body of works which \textit{implicitly} propose to minimize risk-sensitive measures of loss, as a technique to better optimize the standard expected loss. For example, the idea of prioritizing low-loss samples for learning is prevalent in noisy label handling \cite{han18} or curriculum learning \cite{kumar10}. In those contexts, high-loss samples are viewed as either mislabeled, or correctly labeled but detrimental to training dynamics due to their ``difficulty.'' Such algorithms can be viewed as implicitly optimizing a risk-seeking counterpart of CVaR (see \cref{ssec:ioce}). Contrarily (and ironically), it is also common to focus on high-loss samples to improve the model accuracy or accelerate the optimization \cite{chang17}. Such algorithms can be viewed as minimizing risk-averse measures of loss; for instance, learning with \textit{average top-$k$ loss} \cite{fan17} is equivalent to the CVaR minimization when the number of samples is fixed.

Given this widespread use of risk-sensitive learning algorithms, theoretical understandings of their generalization properties are still limited. For risk-seeking learning, the risk measure being minimized is typically not explicitly stated; see \cite{han18}, for instance. For risk-averse learning, existing theoretical results are focused on validating the stability and convergence of the algorithm (e.g. \cite{mannor13}), instead of providing generalization/excess-risk guarantees. Some exceptions in this respect are the recent works on CVaR \cite{fan17,curi19,soma20}; the guarantees, however, are highly specialized for the algorithmic setups considered, such as support vector machines with reproducing kernel Hilbert spaces \cite{fan17}, finite function class\footnote{We note that the most recent version of \cite{curi19} (also appearing NeurIPS 2020) now provides an extension to the case of finite VC-dimension. \cref{lem:unifconv} refines the extended result as well.} \cite{curi19}, or requiring additional smoothness assumptions \cite{soma20}.

To fill this gap, we propose to study risk-sensitive learning schemes under a statistical learning theory viewpoint \cite{haussler89}, where the focus is on the convergence properties of the risk measure itself; learning algorithms are simply abstracted as a procedure of finding a hypothesis minimizing the target risk measure on training data. To discuss various risk-sensitive measures under a unified framework, we rejuvenate the notion of optimized certainty equivalent (OCE \cite{bental86}). With a careful choice of the \textit{disutility} function governing the deviation penalty, OCE covers a wide range of risk-averse measures including the entropic risk, mean-variance, and CVaR (see \cref{sec:risk}). To formalize risk-seeking learning schemes, we newly define \textit{inverted OCE} as a natural counterpart of OCE; inverted OCE covers learning algorithms that only utilize a fraction of samples with smallest losses.

Under this general framework, we establish two performance guarantees for the empirical OCE minimization (EOM) procedure (see \cref{sec:guarantee}); we also provide analogous results for inverted OCEs.
\begin{itemize}[leftmargin=*,topsep=0pt,partopsep=0pt,itemsep=2pt,parsep=0pt]
\item \cref{thm:oce_optimality} provides a general bound on the excess OCE of the EOM hypothesis via Rademacher averages. For the case of CVaR, the bound provides a first data-dependent bound that improves or recovers the existing data-independent bounds (e.g., VC dimension). For the case of expected loss, the bound recovers the standard risk guarantee. The proof is based on the contraction properties \cite{ledoux91} of a product space constructed with the original hypothesis space and dual parameter space.
\item \cref{thm:avg_optimality} controls the expected loss of the EOM hypothesis via a novel variance-based characterization of OCE (\cref{lem:oce_var}). In contrast to the OCE guarantee in \cref{thm:oce_optimality}, the expected loss guarantee does not depend crucially on the properties of the target OCE measure in the \textit{realizable case}, i.e., the hypothesis space is rich enough to contain a hypothesis with an arbitrarily small loss.
\end{itemize}
Finally, we empirically validate an implication of \cref{lem:oce_var} that EOM can be relaxed to the sample variance penalization (SVP) procedure. The relaxed version is known to enjoy stronger generalization properties, making the algorithm an attractive candidate to be considered as an \textit{alternative baseline method} for the OCE minimization. In our experiments on CIFAR-10 \cite{krizhevsky09} with ResNet18 \cite{he16}, we find that batch-based SVP indeed outperforms batch-based CVaR minimization (see \cref{sec:exp}).

All proofs are deferred to the \cref{app:omitted_pfs}.

\textbf{Notations.} For a real number $t \in \Reals$, we let $[t]_+\deq \max\{0,t\}, [t]_- \deq \max\{0,-t\}$. We write $\Reals_+$ to denote the set of all nonnegative real numbers. When $t \in [0,1]$, we let $\bar t \deq 1-t$. For a real-valued function $\phi: \Reals \to \Reals$, we write $\partial \phi(\cdot)$ to denote its subgradient set, and $\Lip(\phi)$ to denote its Lipschitz constant (on the considered domain). The \textit{pushforward} measure of a distribution $P$ by a mapping $f$, i.e., the distribution of $f(Z)$ where $Z \sim P$, is denoted by $f_{\sharp P}$. For the probability distribution $Q$ of a real random variable, the notation $\mathfrak{q}(\alpha;Q)$ denotes the quantile function $\inf\{t\in\Reals~|~\alpha \le F_Q(t)\}$, where $F_Q$ is the cumulative distribution function of $Q$. All $\log$s are base $e$.

\begin{table}[t!]
\caption{Popular OCE risks in machine learning literature, and corresponding disutility functions.} \label{table:oce_example}
\centering
\begin{threeparttable}
\begin{tabular}{l l l l l}
\hline
Name & & Definition & & Disutility function\\
\hline\\ [-0.8em]
Expected loss & & $\E[f(Z)]$ & & $\phi(t) = t$\\ [-0.8em]\\
Entropic risk
& & $\frac{1}{\gamma}\log\E[e^{\gamma f(Z)}]$ & & $\phi_\gamma(t) = \frac{1}{\gamma}e^{\gamma t} - \frac{1}{\gamma}$ \\ [-0.8em]\\
Mean-variance
& & $\E[f(Z)] + c\cdot \E[(f(Z) - \E[f(Z)])^2]$ & & $\phi_c(t) = t + c t^2$\\ [-0.5em]\\
Conditional Value-at-Risk\tnote{$\dag$}
& & $\E[f(Z)~|~f(Z) > \mathfrak{q}(1-\alpha;f_{\sharp P})]$ & & $\phi_\alpha(t) =  \frac{1}{\alpha}[t]_+$\\ [-0.8em]\\
\hline
\end{tabular}
\begin{tablenotes}
\item[$\dag$] \footnotesize{The conditional expectation representation holds only for the $(f,P)$ pairs generating continuous pushforwards $f_{\sharp P}$. A more general definition that covers the discrete case can be found in \cite{rockafellar02}.}
\end{tablenotes}
\end{threeparttable}

\end{table}

\section{Measures for risk-sensitive learning} \label{sec:risk}
We start from the standard statistical learning framework \cite{haussler89}. We have a class $\cP$ of probability measures called \textit{data-generating distributions}, defined on a measurable \textit{instance space} $\cZ$. We are also given a \textit{hypothesis space} $\cF$ of measurable functions $f: \cZ \to \Reals_+$, quantifying the loss incurred by a decision rule when applied to a data instance $z \in \cZ$. A standard measure to aggregate samplewise losses of a hypothesis over a population of data instances is to take an \textit{expected loss},\footnote{We avoid using more popular terminologies (``risk'' and ``empirical risk'') to prevent unnecessary confusion.} defined as
\begin{align}
    R(f) \deq \E_P[f(Z)] = \int_{\cZ} f(z) P(\d z). \label{eq:def_risk}
\end{align}
We assume that the data-generating distribution $P \in \cP$ is not known to the learner. Instead, the learner is assumed to have an access to $n$ copies of training samples $Z^n = (Z_1, \ldots, Z_n)$ independently drawn from $P$. Then, the expected loss can be estimated by the \textit{empirical loss}
\begin{align}
    R_n(f) \deq \E_{P_n}[f(Z)] = \frac{1}{n}\sum_{i=1}^n f(Z_i), \label{eq:def_empirical_risk}
\end{align}
where $P_n$ denotes the empirical distribution of training samples. Both expected loss and empirical loss are \textit{risk-neutral} measures that assign uniform weight on the samples regardless of their losses.

\subsection{Risk-averse measures: optimized certainty equivalents}
Among the diverse set of measures for risk-aversion in economics (see \cref{sec:related} for details), we focus on the \textit{optimized certainty equivalents} (OCE) introduced by Ben-Tal and Teboulle \cite{bental86}.
\begin{definition}[OCE risk]\label{def:oce}
Let the disutility function $\phi:\Reals \to \Reals \cup \{+\infty\}$ be a nondecreasing, closed, convex function with $\phi(0) = 0$ and $1 \in \partial \phi(0)$. Then, the corresponding OCE risk is given as\footnote{We omit $P$ or $\phi$ in $\oce^{\phi}(f;P)$ when clear from context.} 
\begin{align}
    \oce^{\phi}(f;P) \deq \inf_{\lambda \in \Reals} \left\{\lambda + \E_P[\phi(f(Z) - \lambda)]\right\}.
    \label{eq:def_oce}
\end{align}
\end{definition}
\cref{def:oce}, having its root in the expected utility theory \cite{vn47}, may look mysterious at first glance. To demystify a little bit, consider the following reparametrization: define the \textit{excess disutility} as the difference of the disutility and the identity  $\varphi(t) \deq \phi(t) - t$. From \cref{def:oce}, we know that the excess disutility $\varphi$ is a nonnegative, convex function satisfying $\varphi(0) = 0$, with a nondecreasing $\varphi(t) + t$. Then, the OCE risk can be written as
\begin{align}
    \oce(f) = \inf_{\lambda \in \Reals}\left\{\lambda + \E_P[\varphi(f(Z)-\lambda) + f(Z)-\lambda]\right\} = R(f) + \inf_{\lambda \in \Reals}\E_P[\varphi(f(Z)-\lambda)].\label{eq:def_oce_alt}
\end{align}
In other words, OCE additionally penalizes the expected deviation of the random object $f(Z)$ from some optimized anchor point $\lambda$. The penalty is described by the selection of a ``bowl-shaped'' excess disutility $\varphi$ (or equivalently, the selection of disutility $\phi$).

With a careful choice of $\phi$, \cref{def:oce} covers a wide range of risk-averse measures used in machine learning literature, including the expected loss, entropic risk, mean-variance, and CVaR; popular OCE risks and corresponding choices of disutility are summarized in \cref{table:oce_example}. The measures have been used in the following machine learning contexts. (1) Entropic risk: The risk has been used in one of the earliest works on risk-sensitive MDPs \cite{howard72}, and is often revisited in modern reinforcement learning contexts \cite{borkar01,borkar02,osogami12}. In a concurrent work, Li et al. \cite{li20} re-introduces the entropic risk to enhance outlier-robustness and fairness. (2) Mean-variance: Markowitz's mean-variance analysis \cite{markowitz52} is typically relaxed to the variance regularization in the context of MDPs \cite{filar89,mannor13}, multi-armed bandits \cite{sani12,vakili13}, and reinforcement learning \cite{sobel82,prashanth13}. (3) CVaR: CVaR is used in more recent works on risk-averse reinforcement learning regarding bandits \cite{galichet13,cardoso19} and MDPs \cite{chow14,tamar15}. CVaR also enjoys connections to distributional robustness and fairness under general learning scenarios \cite{williamson19,curi19,soma20}.

Similar to the expected loss, the OCE risk of a data-generating distribution can be estimated from the samples by using the empirical distribution as a proxy measure: we define the \textit{empirical OCE risk} as
\begin{align}
    \oce^\phi_n(f) \deq \oce^\phi(f;P_n) = \inf_{\lambda \in \Reals} \Big\{\lambda + \frac{1}{n}\sum_{i=1}^n \phi(f(Z_i) - \lambda)\Big\}. \label{eq:def_empirical_oce}
\end{align}
The empirical OCE \textit{underestimates} the population OCE in general, due to its variational definition.
Indeed, the OCE risk for a mixture distribution $\alpha P + \bar\alpha Q$ is always greater than or equal to the weighted average of OCE risks $\alpha\oce(f;P) + \bar\alpha\oce(f;Q)$, as the inequality $\inf_\lambda \left\{g_1(\lambda) + g_2(\lambda)\right\} \ge \inf_\lambda g_1(\lambda) + \inf_{\lambda} g_2(\lambda)$ holds. From this observation, one may expect a slower two-sided uniform convergence of empirical OCE than empirical loss; this intuition is confirmed later (see \cref{lem:unifconv}).

We also note that OCE risks satisty the following properties, which enable an efficient computation and optimization (see \cite{bental07} for derivations): (a) Convexity, i.e., $\oce(\alpha f_1 + \bar\alpha f_2) \le \alpha\oce(f_1) + \bar\alpha\oce(f_2)$, (b) Shift-additivity, i.e.,  $\oce(f + c) = \oce(f) + c$, (c) Monotonicity, i.e.,  if $f_1(Z) \le f_2(Z)$ with probability $1$, then $\oce(f_1) \le \oce(f_2)$. Convexity is especially useful whenever the loss function underlying the hypotheses are also convex; interested readers are referred to \cref{sec:related}.

\subsection{Risk-seeking measures: inverted OCEs} \label{ssec:ioce}

Unlike in financial economics literature, it often occurs in machine learning schemes \cite{kumar10,han18} to focus on the minimization of losses on \textit{easy examples} (i.e., the samples already with low loss) and disregard hard examples. To formally address such learning algorithms, we propose considering the following family of risk-seeking measures constructed by \textit{inverting} OCE risks.
\begin{definition}[Inverted OCE risk]\label{def:roce}
Let $\phi:\Reals \to \Reals \cup \{+\infty\}$ be a nondecreasing, closed, convex function with $\phi(0) = 0$ and $1 \in \partial \phi(0)$. Then, the corresponding inverted OCE risk is given as
\begin{align}
    \roce^{\phi}(f;P) \deq \sup_{\lambda \in \Reals}\left\{\lambda - \E_P[\phi(\lambda-f(Z))]\right\}. \label{eq:def_roce}
\end{align}
\end{definition}
We call the measure \eqref{eq:def_roce} an ``inverted'' OCE risk due to the following reason: Roughly speaking, the inverted OCE risk is designed to treat the sample at bottom-$\alpha$ loss quantile as the OCE risk treats the sample at top-$\alpha$ loss quantile. This goal can be achieved by defining the inverted OCE risk to satisfy $\roce(f) = -\oce(-f)$, which gives the form \eqref{eq:def_roce}. Analogously to \cref{eq:def_oce_alt}, the inverted OCE risk can be written alternatively as
\begin{align}
    \roce(f) = R(f) - \inf_{\lambda \in \Reals}\E_P[\varphi(\lambda-f(Z))],
\end{align}
where again $\varphi(t) = \phi(t) - t$. In other words, the inverted OCE risk rewards the deviation from the optimized anchor $\lambda$, using the excess utility function $-\varphi(-t)$ to shape the reward.

It is straightforward to see that inverted OCE risks can be used to describe the algorithms that disregard samples with high loss. For example, Han et al. \cite{han18} propose the following algorithm to handle \textit{noisy labels}: two models are trained simultaneously, by selecting and feeding $\alpha$-fraction of samples with the lowest loss to each other. Such a training objective can be described as an inverted version of CVaR, i.e., by using $\phi(t) = \frac{1}{\alpha}[t]_+$. Indeed, we can see the equivalence from the following proposition (see \cref{app:pf_icvar} for the proof).
\begin{proposition}[Average bottom-$k$ loss as inverted CVaR]\label{prop:icvar}
Let $k\in\mathbb{N}, k \le n$ be the desired number of samples. Then, by the choice of disutility function $\phi(t) = \frac{n}{k}[t]_+$, i.e., $\alpha = \frac{k}{n}$, we get
\begin{align}
    \roce^{\phi}(f;P_n) = \frac{1}{k}\sum_{i=1}^{k} f(Z_{\pi(i)}),
\end{align}
where $\pi(i)$ denotes the index of the sample with $i$-th smallest value of $f(\cdot)$ among $Z^n$.
\end{proposition}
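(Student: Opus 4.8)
The plan is to unfold \cref{def:roce} for the specific disutility and reduce the problem to maximizing an explicit concave, piecewise-linear function of the scalar $\lambda$. Plugging $\phi(t) = \frac{n}{k}[t]_+$ and $P = P_n$ into \eqref{eq:def_roce}, the factor $1/n$ from the empirical average cancels the $n/k$ in the disutility, leaving
\begin{align*}
\roce^{\phi}(f;P_n) = \sup_{\lambda \in \Reals} g(\lambda), \qquad g(\lambda) \deq \lambda - \frac{1}{k}\sum_{i=1}^n [\lambda - f(Z_i)]_+.
\end{align*}
Since each map $\lambda \mapsto [\lambda - f(Z_i)]_+$ is convex and piecewise linear, $g$ is concave and piecewise linear, so its supremum is attained at any point whose subdifferential contains $0$. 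First I would sort the losses, writing $v_i \deq f(Z_{\pi(i)})$ so that $v_1 \le \cdots \le v_n$.

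The key step is a slope count. On any open interval of $\lambda$ on which exactly $m$ of the losses satisfy $f(Z_i) < \lambda$, precisely $m$ of the hinge terms are active (each with $\lambda$-derivative $1$), so $g'(\lambda) = 1 - m/k$. Hence $g$ is strictly increasing while fewer than $k$ losses lie below $\lambda$ (slope $1 - m/k > 0$ for $m < k$), flat once exactly $k$ losses lie below $\lambda$ (slope $0$), and strictly decreasing thereafter (slope $1 - m/k < 0$ for $m > k$). The maximum is therefore attained at $\lambda^\star = v_k$, the $k$-th smallest loss, and in fact on the whole flat segment $[v_k, v_{k+1}]$ (with the convention $v_{n+1} = +\infty$). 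Attainment is automatic since $g(\lambda) \to -\infty$ as $\lambda \to -\infty$ and $g$ is eventually nonincreasing as $\lambda \to +\infty$ (constant when $k = n$).

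It then remains to evaluate $g$ at the maximizer. Only the terms with $v_i \le v_k$ contribute to the hinge sum, and the terms tied at $v_k$ contribute zero, so $\sum_{i \,:\, v_i \le v_k}(v_k - v_i) = \sum_{i=1}^{k}(v_k - v_i)$ regardless of ties; thus
\begin{align*}
g(v_k) = v_k - \frac{1}{k}\sum_{i=1}^{k}(v_k - v_i) = v_k - v_k + \frac{1}{k}\sum_{i=1}^{k} v_i = \frac{1}{k}\sum_{i=1}^{k} f(Z_{\pi(i)}),
\end{align*}
which is exactly the claimed average bottom-$k$ loss. I expect the only delicate point to be the bookkeeping under ties among the losses: at a breakpoint the ``slope count'' can jump by more than one, so strictly one should argue via the subdifferential that any $\lambda$ whose left derivative is nonnegative and right derivative is nonpositive is a global maximizer, and verify that $v_k$ meets this test whether or not ties occur. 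As a sanity check (and an alternative route that bypasses the slope analysis), one could instead invoke the identity $\roce^{\phi}(f) = -\oce^{\phi}(-f)$ together with the Rockafellar--Uryasev representation of the empirical CVaR as an average top-$k$ loss \cite{rockafellar02}.
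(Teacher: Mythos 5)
Your proposal is correct and takes essentially the same route as the paper's proof: both unfold \cref{def:roce} into the piecewise-linear objective $\lambda - \frac{1}{k}\sum_{i=1}^n[\lambda - f(Z_i)]_+$, identify $\lambda^\star = f(Z_{\pi(k)})$ as a maximizer by a first-order monotonicity argument (the paper perturbs $\lambda^\star$ up and down; you do the equivalent slope count $1 - m/k$ with a subdifferential check), and evaluate at $\lambda^\star$. Your handling of ties and the explicit final evaluation are slightly more careful than the paper's, which asserts attainment via the perturbation bound and omits the closing computation, but the underlying argument is the same.
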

The proposed notion of inverted OCE can thus be viewed as a generalized class of optimands for \textit{easy example first} algorithms, that comes with theoretical performance guarantees (\cref{thm:oce_optimality,thm:avg_optimality}). We note that this class also includes a ``softer'' variant of the algorithm considered in \cref{prop:icvar}, where a weighted sum of sample losses is taken with weights $\{\frac{\gamma_1}{n},\frac{\gamma_2}{n}\}$ instead of $\{\frac{1}{n\alpha},0\}$ for the bottom-$\alpha$ fractions and top-$\bar\alpha$ fraction, respectively; we naturally assume that $0 \le \gamma_2 < 1 < \gamma_1$ and $\alpha = \frac{1-\gamma_2}{\gamma_1-\gamma_2}$ holds. Indeed, one can simply choose $\phi(t) = \gamma_1[t]_+ - \gamma_2[t]_-$ to get the desired risk.

Given this connection, can we explain the empirical robustness of the noisy label handling algorithms (such as \cite{han18}) by analyzing the properties of inverted OCE risks? While this question is not under the main scope of this paper, we provide a partial answer to this question by analyzing the \textit{influence function} \cite{hampel86}, which is one of the key notions in the discipline of robust statistics. The function measures the sensitivity of a statistic to a distributional shift which may represent an outlier or contaminated data. Formally, the influence function of a statistic $\rho:\cP \to \Reals_+$ with respect to $z^* \in \cZ$ is given as
\begin{align}
    \IF(z^*;P,\rho) \deq \lim_{\eps \to 0^+} \frac{\rho(\bar\eps P + \eps\Delta_{z^*}) - \rho(P)}{\eps}, \label{eq:influence_function}
\end{align}
where $\Delta_{z^*}$ denotes the point probability mass at $z^*$, and $P$ denotes the distribution of uncontaminated samples. If we use the OCE risk as a target statistic (i.e., $\rho(\cdot) = \oce^\phi(f;\cdot)$), then the influence function can be viewed as a sensitivity of OCE minimization procedure against a distributional contamination. As a historical remark, we note that the influence function \eqref{eq:influence_function} is typically studied under a parametric framework, e.g., gauging the robustness of an estimator of distributional parameters such as moments (see the seminal treatise of Huber and Ronchetti \cite{huber09} for a comprehensive overview). Nevertheless, we are not the first to analyze the influence function under a nonparametric scenario; for instance, Christmann and Steinwart \cite{christmann04} have studied the influence function of penalized empirical risk minimization procedure.

In the following proposition, we show that the inverted versions of popular OCE measures have better robustness characteristics than the expected loss (see \cref{app:pf_influence} for the proof).
\begin{proposition}[Influence function of $\roce$]\label{prop:influence}
The influence function for the inverted entropic risk and the inverted mean-variance are given as follows.
\begin{itemize}[leftmargin=3mm]
\setlength\itemsep{-0.2em}
\item Entropic risk: $\frac{1}{\gamma} - \frac{1}{\gamma}\frac{e^{-\gamma f(z^*)}}{\E_P[e^{-\gamma f(Z)}]}$.
\item Mean-variance: $f(z^*) - R(f) + c\left[\E_P[(f(Z) - R(f))^2]
- (f(z^*) - R(f))^2\right]$.
\end{itemize}
Whenever $f_{\sharp P}$ has a continuous density, then the influence function of inverted CVaR is given as
\begin{itemize}[leftmargin=3mm]
\setlength\itemsep{-0.2em}
\item CVaR: $\frac{1}{\alpha}\E_P[\mathfrak{q}(\alpha;f_{\sharp P}) - f(Z)]_+ - \frac{1}{\alpha}[\mathfrak{q}(\alpha;f_{\sharp P}) - f(z^*)]_+$.
\end{itemize}
\end{proposition}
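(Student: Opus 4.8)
The plan is to compute the one-sided derivative at $\eps = 0$ of the map $\eps \mapsto \roce^\phi(f; P_\eps)$, where $P_\eps \deq \bar\eps P + \eps\Delta_{z^*}$, via an envelope-theorem (Danskin) argument that exploits the variational structure of $\roce$ in \cref{def:roce}. The starting observation is that expectation against $P_\eps$ is affine in $\eps$: for any integrable $g$ we have $\E_{P_\eps}[g(Z)] = \bar\eps\,\E_P[g(Z)] + \eps\, g(z^*)$. Writing the inner objective as $G(\lambda, \eps) \deq \lambda - \E_{P_\eps}[\phi(\lambda - f(Z))]$ and setting $H(\lambda) \deq \E_P[\phi(\lambda - f(Z))] - \phi(\lambda - f(z^*))$, the map $G(\lambda, \cdot)$ is therefore affine in $\eps$ for each fixed $\lambda$, with constant slope $\partial_\eps G(\lambda, \eps) = H(\lambda)$. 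Since $\roce^\phi(f; P_\eps) = \sup_\lambda G(\lambda, \eps)$, I expect $\IF(z^*) = H(\lambda^*)$, where $\lambda^*$ maximizes the unperturbed objective $G(\cdot, 0)$.

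To make this rigorous I would sandwich the difference quotient $\eps^{-1}(\rho(\eps) - \rho(0))$, writing $\rho(\eps) \deq \roce^\phi(f;P_\eps)$. Feasibility of the fixed point $\lambda^*$ gives the lower bound $\rho(\eps) \ge G(\lambda^*, \eps) = \rho(0) + \eps\, H(\lambda^*)$, hence $\liminf_{\eps\to0^+}\eps^{-1}(\rho(\eps)-\rho(0)) \ge H(\lambda^*)$. For the matching upper bound, let $\lambda_\eps$ be a maximizer at level $\eps$; optimality of $\lambda^*$ at $\eps=0$ yields $G(\lambda_\eps, 0) \le G(\lambda^*, 0) = \rho(0)$, whence $\rho(\eps) - \rho(0) = G(\lambda_\eps,0) + \eps H(\lambda_\eps) - \rho(0) \le \eps\, H(\lambda_\eps)$. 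It then remains to show $H(\lambda_\eps) \to H(\lambda^*)$, which follows once $\lambda_\eps \to \lambda^*$ and $H$ is continuous at $\lambda^*$. The maximizer of $G(\cdot,0)$ is characterized by the stationarity condition $\E_P[\phi'(\lambda^* - f(Z))] = 1$, and strict concavity of $G(\cdot,0)$ (entropic, mean-variance) or the continuity of $f_{\sharp P}$ (CVaR) pins $\lambda^*$ down uniquely and forces the perturbed optimizers to converge to it.

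Finally, I would evaluate $H(\lambda^*)$ case by case by first solving the stationarity condition. For the entropic disutility, $\phi_\gamma'(t) = e^{\gamma t}$ gives $\lambda^* = -\frac1\gamma\log\E_P[e^{-\gamma f(Z)}]$ together with the identity $\E_P[\phi_\gamma(\lambda^* - f(Z))] = 0$, so that $H(\lambda^*) = -\phi_\gamma(\lambda^* - f(z^*))$, which simplifies to the stated expression. For mean-variance, $\phi_c'(t) = 1 + 2ct$ forces $\lambda^* = R(f)$; substituting into $H$ and expanding $\phi_c(t) = t + ct^2$ gives $(f(z^*)-R(f)) + c[\E_P[(f(Z)-R(f))^2] - (f(z^*)-R(f))^2]$ directly. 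For CVaR, continuity of $f_{\sharp P}$ makes $\phi_\alpha'(\lambda - f(Z)) = \frac1\alpha\mathbb{1}[f(Z)<\lambda]$ almost surely, so stationarity reads $P(f(Z)<\lambda^*) = \alpha$, i.e., $\lambda^* = \mathfrak{q}(\alpha; f_{\sharp P})$; plugging in yields the claimed $[\cdot]_+$ form. The main obstacle is the envelope step in the non-smooth CVaR case: there $\phi_\alpha$ is not differentiable at the origin, so I must invoke the continuous-density hypothesis both to make the stationarity selection almost-surely unambiguous and to guarantee that the quantile (hence $\lambda^*$) is unique and that the perturbed maximizers $\lambda_\eps$ do not drift away as $\eps \to 0^+$; without it the influence function could fail to exist as a genuine limit.
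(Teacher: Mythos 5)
Your proposal is correct, but it takes a genuinely different route from the paper. The paper's proof is a case-by-case direct computation: for each measure it first solves the inner optimization in closed form (obtaining $\roce(f) = -\frac{1}{\gamma}\log\E_P[e^{-\gamma f(Z)}]$, $\roce(f) = R(f) - c\,\sigma^2(f)$, and the quantile representation $\mathfrak{q}(\alpha;f_{\sharp P}) - \frac{1}{\alpha}\E_P[\mathfrak{q}(\alpha;f_{\sharp P}) - f(Z)]_+$ for CVaR) and then evaluates the contamination limit \eqref{eq:influence_function} on that closed form; in the CVaR case this forces an explicit argument that the residual terms involving the perturbed quantile $\mathfrak{q}(\alpha;f_{\sharp \tilde P})$ cancel as $\eps \to 0^+$. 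You instead run a single Danskin-type sandwich exploiting the affinity of $\eps \mapsto G(\lambda,\eps)$, reducing all three cases to the stationarity condition $\E_P[\phi'(\lambda^* - f(Z))] = 1$ plus an evaluation of $H(\lambda^*)$; your evaluations check out, including the entropic identity $\E_P[\phi_\gamma(\lambda^* - f(Z))] = 0$ and the mean-variance selection $\lambda^* = R(f)$. What your approach buys is uniformity: it never needs the closed form of $\roce(f;P_\eps)$ under contamination, which is exactly what makes the paper's CVaR case delicate. The cost is the convergence step $\lambda_\eps \to \lambda^*$, which you correctly flag and which is routine here: \cref{lem:search_space} applied to the contaminated distribution confines $\lambda_\eps$ to $[0,M]$ for bounded $f$ (for the entropic case on unbounded losses, coercivity and strict concavity serve instead), the bound $|G(\lambda,\eps) - G(\lambda,0)| \le \eps \sup_{\lambda \in [0,M]}|H(\lambda)|$ gives uniform convergence of the objectives so that every accumulation point of $\lambda_\eps$ maximizes $G(\cdot,0)$, and $H$ is continuous in all three cases (for CVaR, $\lambda \mapsto [\lambda - x]_+$ is $1$-Lipschitz). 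One caveat you share with the paper rather than introduce: a continuous density for $f_{\sharp P}$ does not by itself make the $\alpha$-quantile — hence the maximizer set — unique, since the density may vanish on an interval at level $\alpha$; if the maximizer set is a nondegenerate interval, the one-sided derivative is the supremum of $H$ over that set, which can differ from $H(\mathfrak{q}(\alpha;f_{\sharp P}))$ because $H$ need not be constant there. The paper's proof implicitly assumes the same nondegeneracy (it needs $\mathfrak{q}(\alpha;f_{\sharp \tilde P}) - \mathfrak{q}(\alpha;f_{\sharp P}) \to 0$), so this is an imprecision of the proposition's hypothesis, not a gap peculiar to your argument.
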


From \cref{prop:influence}, we observe that the influence functions of the example inverted OCE risks have a smaller \textit{worse-case} value than the influence function of the expected loss, which is $f(z^*) - R(f)$. In particular, whenever there exists some $z^*$ such that $R(f)$ is arbitrarily large, then the influence function of the expected loss can grow arbitrarily large as well. On the other hand, influence functions of the example inverted OCE risks are bounded from above by
\begin{align}
    \frac{1}{\gamma},\quad \frac{3}{4c} + c \cdot \E_P[(f(Z) - R(f))^2],\quad \frac{1}{\alpha}\E_P[\mathfrak{q}(\alpha;f_{\sharp P}) - f(Z)]_+,
\end{align}
respectively for inverted entropic risk, mean-variance, and CVaR.


\section{Performance guarantees for empirical OCE minimizers}\label{sec:guarantee}
We now consider an \textit{empirical OCE minimization} (EOM) procedure, finding
\begin{align}
    \feom \deq \argmin_{f\in\cF} \oce_n(f), \label{eq:eom}
\end{align}
instead of the ordinary empirical risk minimization (ERM), which aims to minimize the empirical loss. Existing learning algorithms that implement EOM, either explicitly or implicitly, can be roughly categorized into two categories, depending on their purposes. In the works of the first category (e.g. \cite{fan17,curi19,soma20}), the primary goal is to minimize the population OCE risk (i.e., $\oce(f)$) for risk-aversion or fairness considerations. In the works of the second category (e.g. \cite{chang17,maurer09,li20}), the ultimate goal is to optimize the population expected loss (i.e., $R(f)$), and risk-sensitive measures are used with the belief that minimizing the measures may help accelerate/stabilize the learning dynamics. To address both lines of research, we provide performance guarantees in terms of both OCE and expected loss. In particular, we show that the empirical OCE minimizer \eqref{eq:eom} has the OCE risk and the expected loss similar to those of
\begin{align}
    \foce \deq \argmin_{f\in\cF} \oce(f),\qquad \favg \deq \argmin_{f\in\cF} R(f),
\end{align}
(shown in \cref{ssec:oce_guarantee} and \cref{ssec:risk_guarantee}, respectively). We also give analogous results for the empirical inverted OCE minimization (EIM), where the hypotheses achieving minimum empirical and population $\roce$ will be denoted by $\feim$ and $\fioce$.

\subsection{OCE guarantee via uniform convergence}\label{ssec:oce_guarantee}
First, we provide an excess OCE guarantee of the empirical OCE minimizer based on the uniform convergence of the empirical OCE risk to the population OCE risk. To formalize, recall that the \textit{Rademacher average} \cite{bartlett02} of a hypothesis space $\cF$ given training samples $Z^n$ is defined as
\begin{align}
    \Rad_n(\cF(Z^n)) \deq \E_{\epsilon^n} \Bigg[\sup_{f \in \cF}\bigg\{ \frac{1}{n}\sum_{i=1}^n\epsilon_i f(Z_i)\bigg\}\Bigg], \label{eq:def_rademacher}
\end{align}
where $\{\epsilon_i\}_{i=1}^n$ are independent Rademacher random variables, i.e., $\Pr[\epsilon_i = +1] = \Pr[\epsilon_i = -1] = \frac{1}{2}$. With this definition at hand, we can state our key lemma characterizing uniform convergence properties of OCE risks and inverted OCE risks (see \cref{app:pf_unifconv} for the proof).
\begin{lemma}[Uniform convergence]\label{lem:unifconv}
Suppose that the hypothesis space is bounded, i.e. there exists some $M > 0$ such that $\sup_{z\in\cZ} f(z) \leq M$ holds for all $f\in\cF$. Then, for any $\delta \in (0,1]$,
\begin{align}
\sup_{f\in\cF} \left|\oce(f) - \oce_n(f)\right| \leq \Lip(\phi)\cdot \left(2\E[\Rad_n(\cF(Z^n))] + \frac{M(2 + \sqrt{\log(2/\delta)})}{\sqrt{n}}\right) \label{eq:uc}
\end{align}
holds with probability at least $1-\delta$.

Moreover, the same bound holds whenever the $\oce, \oce_n$ are replaced by $\roce, \roce_n$.
\end{lemma}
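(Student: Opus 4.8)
The plan is to control the two-sided supremum by reducing it to a uniform deviation over an auxiliary loss class, then to invoke symmetrization together with a contraction argument. The starting observation is that both $\oce(f)$ and $\oce_n(f)$ are infima over $\lambda$ of the maps $\lambda\mapsto\lambda+\E_P[\phi(f(Z)-\lambda)]$ and $\lambda\mapsto\lambda+\E_{P_n}[\phi(f(Z)-\lambda)]$, which differ only through $P$ versus $P_n$. Since the additive $\lambda$ cancels, a difference of infima taken over a common set $\Lambda$ is bounded by the supremum of the pointwise differences, giving
\begin{align}
\sup_{f\in\cF}\bigl|\oce(f)-\oce_n(f)\bigr|\le\sup_{f\in\cF,\ \lambda\in\Lambda}\bigl|\E_P[\phi(f(Z)-\lambda)]-\E_{P_n}[\phi(f(Z)-\lambda)]\bigr|.\nonumber
\end{align}
The crucial first step is therefore to show that the optimal anchor may be restricted to the compact interval $\Lambda=[0,M]$: differentiating $\lambda\mapsto\lambda+\E[\phi(f(Z)-\lambda)]$ and using that $\phi$ is convex and nondecreasing with $1\in\partial\phi(0)$, one sees that for $\lambda<0$ the argument $f(Z)-\lambda$ is positive so the relevant subderivative is $\ge1$ and the objective is nonincreasing, whereas for $\lambda>M$ the argument is negative so the subderivative is $\le1$ and the objective is nondecreasing. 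Hence the minimizer lies in $[0,M]$ for both $P$ and $P_n$ (the latter because $P_n$ is supported in $[0,M]$).

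With $\lambda$ confined to $[0,M]$, I would introduce $\mathcal{G}\deq\{z\mapsto\phi(f(z)-\lambda):f\in\cF,\ \lambda\in[0,M]\}$ and bound the right-hand side by a standard two-step argument. First, symmetrization bounds each one-sided deviation $\E\bigl[\sup_{g\in\mathcal{G}}(\E_P[g]-\E_{P_n}[g])\bigr]$ (and its mirror image) by $2\,\E[\Rad_n(\mathcal{G}(Z^n))]$. Second, since $f(z)-\lambda\in[-M,M]$ and $\phi(0)=0$ with Lipschitz constant $\Lip(\phi)$ on this range, every $g\in\mathcal{G}$ takes values in $[-\Lip(\phi)M,\Lip(\phi)M]$, so $Z^n\mapsto\sup_{g}(\E_P[g]-\E_{P_n}[g])$ has bounded differences of order $\Lip(\phi)M/n$; McDiarmid's inequality applied to each one-sided quantity at confidence $\delta/2$, followed by a union bound, converts the expectation into the high-probability additive term of order $\Lip(\phi)M\sqrt{\log(2/\delta)}/\sqrt{n}$.

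It remains to relate $\Rad_n(\mathcal{G})$ to $\Rad_n(\cF)$. The key point is that $\mathcal{G}=\phi\circ\mathcal{H}$ for the single fixed contraction $\phi$ and the shifted class $\mathcal{H}\deq\{z\mapsto f(z)-\lambda\}$, so the Ledoux--Talagrand contraction \cite{ledoux91} gives $\Rad_n(\mathcal{G})\le\Lip(\phi)\,\Rad_n(\mathcal{H})$ pointwise in $Z^n$. Because $\frac1n\sum_i\epsilon_i(f(Z_i)-\lambda)$ separates in $f$ and $\lambda$, the supremum factorizes as $\sup_f\frac1n\sum_i\epsilon_i f(Z_i)+\sup_{\lambda\in[0,M]}\bigl(-\lambda\frac1n\sum_i\epsilon_i\bigr)$, whose total expectation is $\E[\Rad_n(\cF(Z^n))]+M\,\E_\epsilon\bigl[\bigl[\frac1n\sum_i\epsilon_i\bigr]_-\bigr]$; bounding the last expectation via $\E|\frac1n\sum_i\epsilon_i|\le1/\sqrt{n}$ produces the remaining $M/\sqrt{n}$ contribution. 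Collecting the three terms and factoring out $\Lip(\phi)$ yields the claimed bound.

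For the inverted OCE, the identity $\roce(f)=-\oce(-f)$ (equivalently, replacing $\inf_\lambda$ by $\sup_\lambda$ and $\phi(f-\lambda)$ by $\phi(\lambda-f)$) lets the same argument run verbatim: the optimal anchor again lies in $[0,M]$, the relevant class is $\phi\circ\{z\mapsto\lambda-f(z)\}$, and the Rademacher symmetry $\epsilon\mapsto-\epsilon$ turns the $-f$ dependence back into $\Rad_n(\cF)$, so the bound is identical. I expect the main obstacle to be the anchor-restriction step: without confining $\lambda$ to a bounded set, the class $\mathcal{H}$ is unbounded and its Rademacher average diverges, so the structural properties of $\phi$ (convexity, monotonicity, $1\in\partial\phi(0)$) must be exploited precisely there. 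A secondary subtlety worth flagging is recognizing $\mathcal{G}$ as a \emph{single} contraction applied to the product class $\mathcal{H}$, which avoids any dimension-dependent (vector) contraction and keeps the leading constant at $\Lip(\phi)$.
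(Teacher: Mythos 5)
Your proposal is correct and takes essentially the same route as the paper's proof: restricting the anchor $\lambda$ to $[0,M]$ via the convexity/subgradient structure of $\phi$ (the paper's \cref{lem:search_space}), passing to the product class $\{f(\cdot)-\lambda\}$, applying symmetrization with the Ledoux--Talagrand contraction (valid since $\phi(0)=0$), factorizing the Rademacher average into $\E[\Rad_n(\cF(Z^n))]+M/\sqrt{n}$, and finishing with McDiarmid plus a union bound over the two one-sided deviations. The one nuance: to land on the stated constant, take the bounded difference as $\Lip(\phi)M/n$ by Lipschitzness in the single changed coordinate with $\lambda$ held fixed (since $f\in[0,M]$), rather than from the value range $[-\Lip(\phi)M,\Lip(\phi)M]$, which would cost an extra factor of $2$ inside the square root.
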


Similar to typical uniform convergence guarantees for the empirical loss \cite{bartlett02}, the bound \eqref{eq:uc} vanishes to zero at the rate $1/\sqrt{n}$ for standard hypothesis spaces whose expected Rademacher averages could be bounded from above by a $\cO(1/\sqrt{n})$ term. Indeed, \cref{lem:unifconv} closely recovers the usual uniform convergence bound for expected loss if we plug in $\phi(t) = t$, with a slack of $2M/\sqrt{n}$ that is small compared to the other terms. We also note that the Lipschitz constant of disutility functions cannot be smaller than one, and thus \cref{lem:unifconv} cannot be used to guarantee a strictly faster convergence rate than the bound for the expected loss.

\cref{lem:unifconv} generalizes and improves over existing guarantees on CVaR \cite{scholkopf00,takeda09,curi19,soma20}. Indeed, all previous results (up to our knowledge) are described in terms of data-independent complexity measures of the hypothesis space, e.g. VC-dimension; roughly, this is due to the proof technique relying on a direct use of union bound. In contrast, by considering a dual product space approach (see, e.g. \cite{lee18}) combined with contraction principles \cite{ledoux91}, we arrive at the bound described via Rademacher averages. Rademacher average is a data-dependent complexity measure \cite{bartlett02} which enjoys a significant benefit in the analysis of modern hypothesis spaces. Indeed, the data-dependency is considered an irreplaceable element to understanding the generalization properties of neural networks \cite{zhang17}. At the same time, Rademacher averages can be controlled by data-independent complexity measures such as VC-dimension, to recover existing results; see \cite{bartlett02} for an extensive discussion.

Using \cref{lem:unifconv}, we give an excess OCE risk guarantee on the hypothesis minimizing the empirical OCE risk (see \cref{app:pf_oceopt} for the proof).

\begin{theorem}[OCE guarantee]\label{thm:oce_optimality}
Suppose that the hypothesis space is bounded, i.e. there exists some $M > 0$ such that $\sup_{z\in\cZ} f(z) \leq M$ holds for all $f\in\cF$. Then, the empirical OCE minimizer \eqref{eq:eom} satisfies
\begin{align}
    \oce(\feom) \leq \oce(\foce) + \Lip(\phi) \cdot \left(4\E[\Rad_n(\cF(Z^n))] + \frac{2M(2 + \sqrt{\log(2/\delta)})}{\sqrt{n}}\right),
\end{align}
with probability at least $1-\delta$. For the empirical inverted OCE minimizer, we analogously have
\begin{align}
    \roce(\feim) \leq \roce(\fioce) + \Lip(\phi) \cdot \left(4\E[\Rad_n(\cF(Z^n))] + \frac{2M(2 + \sqrt{\log(2/\delta)})}{\sqrt{n}}\right),
\end{align}
with probability at least $1-\delta$.
\end{theorem}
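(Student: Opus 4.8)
The plan is to derive this excess-OCE bound as a routine consequence of the two-sided uniform convergence guarantee in \cref{lem:unifconv}; the substantive work has already been carried out in establishing that lemma, so what remains is the standard ``symmetrize the uniform bound'' argument familiar from empirical risk minimization.

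First I would fix $\delta \in (0,1]$ and work on the event---of probability at least $1-\delta$---on which the conclusion of \cref{lem:unifconv} holds, namely
\begin{align}
\sup_{f\in\cF}\left|\oce(f) - \oce_n(f)\right| \leq \Delta, \qquad \Delta \deq \Lip(\phi)\cdot\left(2\E[\Rad_n(\cF(Z^n))] + \frac{M(2+\sqrt{\log(2/\delta)})}{\sqrt{n}}\right).
\end{align}
The key point is that this controls the deviation for \emph{every} $f\in\cF$ at once, hence simultaneously for the data-dependent minimizer $\feom$ and the fixed population optimum $\foce$.

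Next I would chain three inequalities on this event. Since the bound is two-sided, it gives $\oce(\feom) \leq \oce_n(\feom) + \Delta$. By the defining optimality of $\feom$ in \eqref{eq:eom}, we have $\oce_n(\feom) \leq \oce_n(\foce)$. Finally, applying the uniform bound once more yields $\oce_n(\foce) \leq \oce(\foce) + \Delta$. Concatenating the three inequalities produces $\oce(\feom) \leq \oce(\foce) + 2\Delta$, which is exactly the claimed estimate once the definition of $\Delta$ is substituted (the factor of two doubling the Rademacher and slack terms).

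The hard part here is essentially nonexistent, since the difficulty has been front-loaded into \cref{lem:unifconv}; the only point requiring care is that the supremum bound be invoked for both the random hypothesis $\feom$ and the fixed comparator $\foce$ under the \emph{same} high-probability event, which is legitimate precisely because the lemma is a uniform (supremum-over-class) statement rather than a pointwise one. The inverted-OCE claim then follows verbatim, replacing the quadruple $(\oce,\oce_n,\feom,\foce)$ by $(\roce,\roce_n,\feim,\fioce)$ and invoking the second half of \cref{lem:unifconv}, which asserts the identical uniform bound for $\roce$.
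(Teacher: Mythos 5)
Your proposal is correct and matches the paper's own proof: the same three-term decomposition $\oce(\feom)-\oce(\foce) = [\oce(\feom)-\oce_n(\feom)] + [\oce_n(\feom)-\oce_n(\foce)] + [\oce_n(\foce)-\oce(\foce)]$, with the middle term nonpositive by the optimality of $\feom$ and the outer terms each bounded by the uniform deviation of \cref{lem:unifconv} on a single high-probability event, doubling the constants exactly as claimed. The inverted-OCE case is handled identically in the paper as well, so there is nothing further to add.
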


For sufficiently expressive hypothesis spaces, $\oce(\foce)$ will become close to zero, and the upper bound becomes directly proportional to the Lipschitz constant of the disutility function.

\subsection{Expected loss guarantee via variance-based characterization}\label{ssec:risk_guarantee}
To establish expected loss guarantees for the empirical OCE minimizer, we give two inequalities relating moments of the loss population to the OCE risk. The first one follows directly from the definitions of $\oce$ and $\roce$ (see \cref{app:pf_oceavg} for the proof).
\begin{proposition}[Mean-based characterization]\label{prop:oceavg}
For any $f, P$ and $\phi$, we have
\begin{align}
0 \le \roce(f) \le R(f) \le \oce(f) \le \Lip(\phi)\cdot R(f).
\end{align}
\end{proposition}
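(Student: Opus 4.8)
The plan is to establish the four inequalities in Proposition~\ref{prop:oceavg} one at a time, exploiting the variational definitions of $\oce$ and $\roce$ together with the structural constraints on the disutility function $\phi$, namely $\phi(0)=0$, $1\in\partial\phi(0)$, nondecreasing, convex, and closed. The convexity-plus-subgradient condition gives the pointwise linear lower bound $\phi(t)\ge t$ for all $t$ (since $1\in\partial\phi(0)$ and $\phi(0)=0$), and the Lipschitz condition gives a matching upper bound $\phi(t)\le\Lip(\phi)\cdot t$ on the nonnegative domain; these two bounds on $\phi$ are the workhorses for the whole chain.

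First I would prove the central sandwich $R(f)\le\oce(f)$ and $\roce(f)\le R(f)$. For the upper inequality, I would use the alternative representation \eqref{eq:def_oce_alt}, $\oce(f)=R(f)+\inf_\lambda\E_P[\varphi(f(Z)-\lambda)]$, and observe that the infimum term is nonnegative because $\varphi\ge 0$; hence $\oce(f)\ge R(f)$. The bound $\roce(f)\le R(f)$ follows symmetrically from the inverted representation $\roce(f)=R(f)-\inf_\lambda\E_P[\varphi(\lambda-f(Z))]$, again using $\varphi\ge 0$ to see that we subtract a nonnegative quantity. Alternatively, since $\roce(f)=-\oce(-f)$ by construction, these two middle inequalities are mirror images of each other and one can be deduced from the other.

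Next I would handle the two outer inequalities. For $\oce(f)\le\Lip(\phi)\cdot R(f)$, the idea is to bound the infimum by evaluating the objective in \eqref{eq:def_oce} at the specific choice $\lambda=0$: this yields $\oce(f)\le\E_P[\phi(f(Z))]$. Since $f(Z)\ge 0$ (as $\cF$ maps into $\Reals_+$), the bound $\phi(t)\le\Lip(\phi)\cdot t$ on $[0,\infty)$ — which uses $\phi(0)=0$ together with the Lipschitz property — gives $\E_P[\phi(f(Z))]\le\Lip(\phi)\cdot\E_P[f(Z)]=\Lip(\phi)\cdot R(f)$. For the leftmost inequality $0\le\roce(f)$, I would evaluate the supremum in \eqref{eq:def_roce} at $\lambda=0$, obtaining $\roce(f)\ge 0-\E_P[\phi(-f(Z))]=-\E_P[\phi(-f(Z))]$, and then note that for the nonnegative argument regime the relevant sign works out because $\phi$ is nondecreasing with $\phi(0)=0$, forcing $\phi(-f(Z))\le 0$ whenever $f(Z)\ge 0$, so $-\E_P[\phi(-f(Z))]\ge 0$.

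The main obstacle I anticipate is ensuring that the two key pointwise bounds on $\phi$ — the linear lower bound $\phi(t)\ge t$ everywhere and the Lipschitz upper bound $\phi(t)\le\Lip(\phi)\cdot t$ restricted to $t\ge 0$ — are both correctly justified from the stated hypotheses, and that I consistently exploit the nonnegativity $f(Z)\ge 0$ to apply the upper bound only on the domain where it holds. Care is also needed that $\Lip(\phi)$ is the Lipschitz constant on the range of $f(Z)-\lambda$ actually in play; since we select $\lambda=0$ and $f(Z)\in[0,M]$, the relevant domain is $[0,M]\subseteq\Reals_+$, which is exactly where $\phi(t)\le\Lip(\phi)\cdot t$ is valid. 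Once these bounds are in place, each of the four inequalities reduces to a one-line application of monotonicity of expectation, so the rest is routine.
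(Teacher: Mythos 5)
Your proposal is correct and takes essentially the same route as the paper's proof: the two outer inequalities come from evaluating the variational formulas at $\lambda = 0$ together with $\phi(0)=0$, monotonicity of $\phi$, and the Lipschitz bound $\phi(t)\le \Lip(\phi)\cdot t$ on $\Reals_+$ (valid since $f\ge 0$), while the middle sandwich rests on the pointwise bound $\phi(t)\ge t$ implied by $1\in\partial\phi(0)$. Your use of the representation \eqref{eq:def_oce_alt} with $\varphi \ge 0$ for the middle inequalities is merely a repackaging of that same fact, since the paper substitutes $\phi(t)\ge t$ directly into \eqref{eq:def_oce} and \eqref{eq:def_roce}.
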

Combining \cref{prop:oceavg} with \cref{lem:unifconv}, one can obtain an elementary expected loss guarantee on the EOM hypothesis: With probability at least $1-\delta$, we have
\begin{align}
    R(\feom) \le \Lip(\phi) \cdot \left(R(\favg) + 4\E[\Rad_n(\cF(Z^n))] + \frac{2M(2 + \sqrt{\log(2/\delta)})}{\sqrt{n}}\right). \label{eq:naive_bound}
\end{align}
The explanatory power of Ineq.~\eqref{eq:naive_bound}, however, is clearly limited. To see this, consider a sufficiently expressive hypothesis space, so that one can always find a hypothesis perfectly fitting the training data. In this case, the EOM hypothesis also minimizes the expected loss, as we know that $R_n(f) \leq \oce_n(f)$ holds from \cref{prop:oceavg}. Then, one may expect an expected loss guarantee of the EOM hypothesis to be similar to that of the ERM hypothesis, not scaling with $\Lip(\phi)$.

In light of this observation, we provide an alternative bound which relates OCE risks to both mean and variance of the loss population. For conciseness, we first introduce a shorthand notation for the \textit{loss variance} of a hypothesis $f\in\cF$.\footnote{Again, we drop $P$ whenever the choice is clear from context, and write $\sigma_n(f)$ for the empirical version.}
\begin{align}
    \sigma(f;P) := \sqrt{\E_P[(f(Z) - R(f))^2]}.
\end{align}
Now we can prove the following lemma bounding the difference of OCE risks and expected loss in terms of loss variance (see \cref{app:pf_ocevar} for the proof).
\begin{lemma}[Variance-based characterization]\label{lem:oce_var}
Let $f:\cZ \to \Reals_+$ be a bounded function, i.e. there exists some $M>0$ such that $\sup_{z\in\cZ} f(z) \le M$ holds. Then, we have
\begin{align}
    C_{\phi} \cdot \sigma^2(f) &\le \oce(f) - R(f) \le \frac{\Lip(\phi)}{2}\cdot \sigma(f)\\
    C_{\phi} \cdot \sigma^2(f) &\le R(f) - \roce(f) \le \frac{\Lip(\phi)}{2}\cdot \sigma(f),
\end{align}
where $C_{\phi} \deq \inf_{0<|t|\le M} \frac{\phi(t)-t}{t^2} \ge 0$.
\end{lemma}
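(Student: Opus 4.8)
The plan is to phrase everything through the excess disutility $\varphi(t) = \phi(t) - t$, using the representation $\oce(f) - R(f) = \inf_{\lambda\in\Reals}\E_P[\varphi(f(Z)-\lambda)]$ from Eq.~\eqref{eq:def_oce_alt} and, analogously, $R(f)-\roce(f) = \inf_{\lambda\in\Reals}\E_P[\varphi(\lambda-f(Z))]$. I would first record the structural facts about $\varphi$ that follow from the hypotheses on $\phi$: convexity together with $1\in\partial\phi(0)$ forces any subgradient of $\phi$ to be $\ge 1$ on $\Reals_+$ and $\le 1$ on $\Reals_-$, while monotonicity gives $\phi'\ge 0$ and the Lipschitz bound gives $\phi'\le\Lip(\phi)$. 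Consequently $\varphi$ is convex and minimized at $0$ with $\varphi(0)=0$ (so $\varphi\ge 0$ and $C_\phi\ge 0$), and its slope lies in $[-1,0]$ on $(-\infty,0]$ and in $[0,\Lip(\phi)-1]$ on $[0,\infty)$.

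\textbf{Localizing the anchor.} The one delicate point is that the quadratic minorant $\varphi(t)\ge C_\phi t^2$ is only guaranteed for $|t|\le M$, so I must confine the optimal $\lambda$. Setting $g(\lambda)=\E_P[\varphi(f(Z)-\lambda)]$, the slope signs of $\varphi$ show that $g$ is nonincreasing on $(-\infty,0]$ (there $f(Z)-\lambda\ge 0$, so $\varphi'\ge 0$) and nondecreasing on $[M,\infty)$ (there $f(Z)-\lambda\le 0$, so $\varphi'\le 0$), using $f\ge 0$. Therefore $\inf_{\Reals}g=\inf_{[0,M]}g$, and for $\lambda\in[0,M]$ the argument $f(Z)-\lambda$ stays in $[-M,M]$, which is precisely where the minorant is available.

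\textbf{Lower bounds.} On $[0,M]$ the minorant gives $g(\lambda)\ge C_\phi\,\E_P[(f(Z)-\lambda)^2]$; minimizing the right-hand quadratic over $\lambda$ (the minimizer $R(f)$ lies in $[0,M]$ since $0\le f\le M$) produces $C_\phi\sigma^2(f)$. Combined with the localization step this yields $\oce(f)-R(f)\ge C_\phi\sigma^2(f)$. The $\roce$ lower bound follows mutatis mutandis with $\varphi(\lambda-f(Z))$, whose minimizing $\lambda$ again falls in $[0,M]$.

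\textbf{Upper bounds and main obstacle.} For these I would discard the infimum and substitute the single anchor $\lambda=R(f)$, writing $Y=f(Z)-R(f)$ so that $\E_P[Y]=0$. The one-sided slope bounds give $\varphi(Y)\le(\Lip(\phi)-1)\,Y$ when $Y\ge 0$ and $\varphi(Y)\le -Y$ when $Y<0$, hence $\E_P[\varphi(Y)]\le(\Lip(\phi)-1)\,\E_P\big[[Y]_+\big]+\E_P\big[[Y]_-\big]$. Since $\E_P[Y]=0$ forces $\E_P\big[[Y]_+\big]=\E_P\big[[Y]_-\big]=\frac{1}{2}\E_P[|Y|]$, this collapses to $\Lip(\phi)\cdot\frac{1}{2}\E_P[|Y|]\le\frac{\Lip(\phi)}{2}\sigma(f)$ by Jensen's inequality. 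The $\roce$ upper bound is the identical computation applied to $-Y$, which shares mean $0$ and variance $\sigma^2(f)$. As flagged, the only genuinely nontrivial step is the $[0,M]$ localization of the optimal anchor, without which the quadratic minorant cannot legitimately be invoked for extreme $\lambda$; everything else reduces to the piecewise slope bounds on $\varphi$ and the identity $\E_P\big[[Y]_+\big]=\E_P\big[[Y]_-\big]$.
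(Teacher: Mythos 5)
Your proposal is correct and takes essentially the same route as the paper: the anchor-localization you flag as the one delicate step is precisely the paper's \cref{lem:search_space} (proved there by perturbing $\lambda$ below $0$ and above $M$ using $1\in\partial\phi(0)$ — the same fact behind your slope-sign monotonicity of $g$), after which your lower bounds invoke the quadratic minorant from $C_\phi$ over $\lambda\in[0,M]$ exactly as the paper does. Your upper bounds also coincide with the paper's — plug in $\lambda=R(f)$, use the mean-zero identity $\E_P[Y]_+=\E_P[Y]_-=\frac{1}{2}\E_P|Y|$, and finish with Jensen — since your piecewise bound $(\Lip(\phi)-1)[t]_+ + [t]_-$ on $\varphi$ is just the paper's bound $\phi(t)\le\Lip(\phi)\,[t]_+$ rewritten through $\varphi(t)=\phi(t)-t$.
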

We note that Gotoh et al. \cite{gotoh18} also relates (dual forms of) OCE risks to variance, where the authors assume the twice continuous differentiability of the convex conjugate of the disutility function $\phi$ and use Taylor expansion to arrive at an asymptotic expression for the case $\Lip(\phi) \to 1$. \cref{lem:oce_var}, on the other hand, exploits the convexity of $\phi$ and the dominance relations between disutility functions to provide nonasymptotic bound without requiring further smoothness assumptions on $\phi$. For example, the conjugate disutility function of CVaR is not differentiable, but \cref{lem:oce_var} holds with $C_{\phi} = \frac{1}{M}\min\{1,\frac{\bar\alpha}{\alpha}\}$ and $\Lip(\phi) = \frac{1}{\alpha}$.

Using \cref{lem:oce_var}, we can prove the following theorem (see \cref{app:pf_avgopt} for the proof).
\begin{theorem}[Expected loss guarantee]\label{thm:avg_optimality}
Let $P$ be a fixed, unknown data-generating distribution, and let hypothesis space be bounded, i.e. there exists some $M > 0$ such that $\sup_{z\in\cZ} f(z) \leq M$ holds almost surely for all $f\in\cF$. Then, for any $\delta \in (0,1]$ and $n \ge 2$, we have
\begin{align}
    R(\feom) \le \left(R(\favg) + \frac{\Lip(\phi)}{2}\sigma(\favg)\right) + 4\E [\Rad(\cF(Z^n))] + \frac{4M\sqrt{\log(3/\delta)}}{\sqrt{n}}, \label{eq:exploss_ineq}
\end{align}
with probability at least $1-\delta$. Under the same assumptions, we have
\begin{align}
    R(\feim) \le R(\favg) + 4\E [\Rad(\cF(Z^n))] + \frac{4M\sqrt{\log(2/\delta)}}{\sqrt{n}} + \frac{\Lip(\phi)}{2}\sigma_n(\feim), \label{eq:exploss_ineq2}
\end{align}
with probability at least $1-\delta$. Moreover, one can replace $\favg$ in \eqref{eq:exploss_ineq}, \eqref{eq:exploss_ineq2} by any fixed $f \in \cF$.
\end{theorem}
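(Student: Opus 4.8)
The plan is to use \cref{lem:oce_var} as the bridge between the objective the empirical minimizer actually optimizes (an OCE or inverted OCE) and the expected loss we want to control, arranged so that the smoothness constant $\Lip(\phi)$ multiplies \emph{only} the variance term and never the generalization gap. For the generalization of the expected loss itself I would invoke the ordinary Rademacher uniform-convergence bound, equivalently \cref{lem:unifconv} with $\phi(t)=t$ (so $\Lip(\phi)=1$), which is $\Lip(\phi)$-free. Concretely, each excess-loss bound splits into three ingredients: (i) a one-sided uniform-convergence gap $\sup_{f\in\cF}(R(f)-R_n(f))$ for the empirical minimizer, (ii) an optimization inequality that trades the empirical OCE objective for the empirical loss plus an empirical variance penalty via \cref{prop:oceavg} and \cref{lem:oce_var}, and (iii) pointwise concentration for the data-independent comparator $\favg$.

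I would first carry out the inverted case \eqref{eq:exploss_ineq2}, which is cleanest. Write $R(\feim)\le \sup_{f\in\cF}(R(f)-R_n(f)) + R_n(\feim)$. For the empirical term, chain three facts: the empirical form of \cref{prop:oceavg} gives $\roce_n(\favg)\le R_n(\favg)$; minimality of $\feim$ gives $\roce_n(\feim)\le \roce_n(\favg)$; and the empirical form of \cref{lem:oce_var} gives $R_n(\feim)-\roce_n(\feim)\le \frac{\Lip(\phi)}{2}\sigma_n(\feim)$, which is legitimate because every $f\in\cF$ is bounded by $M$ so \cref{lem:oce_var} applies to $P_n$. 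Together these yield $R_n(\feim)\le R_n(\favg)+\frac{\Lip(\phi)}{2}\sigma_n(\feim)$. The uniform-convergence term supplies the Rademacher and $M/\sqrt{n}$ pieces (one failure event) and a pointwise Hoeffding bound on the fixed $\favg$ converts $R_n(\favg)$ to $R(\favg)$ (a second event), giving the stated $\log(2/\delta)$. The empirical variance $\sigma_n(\feim)$ is deliberately \emph{left empirical} on the right-hand side: since $\feim$ is data-dependent, its population variance would demand a uniform variance bound, so retaining the empirical variance is exactly what lets the argument close.

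The forward case \eqref{eq:exploss_ineq} uses the same skeleton but produces the \emph{population} variance directly. From $R(\feom)\le \sup_{f\in\cF}(R(f)-R_n(f))+R_n(\feom)$ and $R_n(\feom)\le \oce_n(\feom)\le \oce_n(\favg)$ (now using $R_n\le\oce_n$ and minimality of $\feom$), it remains to control the fixed comparator $\oce_n(\favg)$. I would take a (near-)minimizing anchor $\lambda^\star$ in the population definition $\oce(\favg)=\inf_\lambda\{\lambda+\E_P[\phi(\favg(Z)-\lambda)]\}$ and plug it into the empirical infimum, so that $\oce_n(\favg)\le \lambda^\star+\frac{1}{n}\sum_i \phi(\favg(Z_i)-\lambda^\star)$ is an average of i.i.d. bounded variables with mean exactly $\oce(\favg)$. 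A pointwise Bernstein bound (the third event, hence $\log(3/\delta)$) gives $\oce_n(\favg)\le \oce(\favg)+(\mathrm{deviation})$, and the population \cref{lem:oce_var} supplies $\oce(\favg)\le R(\favg)+\frac{\Lip(\phi)}{2}\sigma(\favg)$; combining yields \eqref{eq:exploss_ineq}. In both cases the remark that $\favg$ may be replaced by any fixed $f\in\cF$ is immediate, since the only properties used are minimality of the empirical minimizer against every competitor and data-independence of the comparator.

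The main obstacle is the comparator concentration in the forward case, namely controlling $(\mathrm{deviation})$ in a way compatible with the clean $1/\sqrt n$ rate. Its leading Bernstein term is variance-scaled, of order $\Lip(\phi)\,\sigma(\favg)\sqrt{\log(1/\delta)/n}$, using the Lipschitz variance contraction $\mathrm{Var}(\phi(\favg(Z)-\lambda^\star))\le \Lip(\phi)^2\sigma^2(\favg)$; the delicate point is that this fluctuation is coupled to $\Lip(\phi)$, so one must argue it is dominated by the leading $\frac{\Lip(\phi)}{2}\sigma(\favg)$ term so that only the $\Lip(\phi)$-free $M/\sqrt n$ piece survives in the complexity. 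One also needs $\lambda^\star$ to lie in a bounded range so the Bernstein range parameter is $O(\Lip(\phi)M)$; this follows from first-order optimality of $\lambda^\star$ together with $1\in\partial\phi(0)$ and boundedness of $\favg$. The remaining verifications, the applicability of \cref{lem:oce_var} to $P_n$ and the union bound over the two (resp. three) failure events with budget $\delta/2$ (resp. $\delta/3$), are routine.
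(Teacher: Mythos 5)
Your skeleton is the paper's: both bounds rest on the $\Lip(\phi)$-free uniform convergence of the expected loss, minimality of the empirical minimizer against the fixed comparator, \cref{prop:oceavg}, and the empirical/population forms of \cref{lem:oce_var} (whose application to $P_n$ is legitimate exactly as you argue). Your treatment of \eqref{eq:exploss_ineq2} is essentially identical to the paper's proof, which runs the same chain $R_n(\feim)\le\roce_n(\feim)+\frac{\Lip(\phi)}{2}\sigma_n(\feim)\le\roce_n(\favg)+\frac{\Lip(\phi)}{2}\sigma_n(\feim)\le R_n(\favg)+\frac{\Lip(\phi)}{2}\sigma_n(\feim)$ and, like you, deliberately leaves $\sigma_n(\feim)$ empirical (the paper reuses $\sup_{f\in\cF}|R(f)-R_n(f)|$ where you use a pointwise Hoeffding bound on $R_n(\favg)$; immaterial). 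For \eqref{eq:exploss_ineq} you genuinely diverge at the comparator-concentration step: the paper stays on the empirical side, bounding $\oce_n(\favg)\le R_n(\favg)+\frac{\Lip(\phi)}{2}\sigma_n(\favg)$ via the empirical \cref{lem:oce_var} and then converting $\sigma_n(\favg)$ to $\sigma(\favg)$ by the sample-standard-deviation concentration of Maurer and Pontil (their Theorem 10, with budget $\delta/3$), whereas you anchor at a population-optimal $\lambda^\star$ (which \cref{lem:search_space} already places in $[0,M]$, so your boundedness step is fine) and apply a pointwise concentration bound to the i.i.d. average $\lambda^\star+\frac{1}{n}\sum_i\phi(\favg(Z_i)-\lambda^\star)$, whose mean is $\oce(\favg)$, before invoking the population \cref{lem:oce_var}. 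Your route is more self-contained, needing no sample-variance concentration, and yields a bound of the same shape.

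The genuine gap is the absorption claim you yourself flag. The Bernstein fluctuation $\Lip(\phi)\,\sigma(\favg)\sqrt{2\log(3/\delta)/n}$ is \emph{not} dominated by $\frac{\Lip(\phi)}{2}\sigma(\favg)$ for all $n\ge2$ and $\delta\in(0,1]$ --- that requires $n\ge 8\log(3/\delta)$ --- and even when the domination holds, absorbing it doubles the variance coefficient from $\frac{\Lip(\phi)}{2}$ to $\Lip(\phi)$, so the literal inequality \eqref{eq:exploss_ineq}, with coefficient $\frac{\Lip(\phi)}{2}$ and a $\Lip(\phi)$-free deviation $4M\sqrt{\log(3/\delta)/n}$, does not follow; the Bernstein range term of order $\Lip(\phi)M\log(3/\delta)/n$ is likewise $\Lip$-coupled and cannot hide inside the $M/\sqrt{n}$ piece. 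What your route honestly delivers (more simply via plain Hoeffding on the anchored average, whose range is at most $2\Lip(\phi)M$, making Bernstein and the variance-contraction discussion unnecessary) is \eqref{eq:exploss_ineq} with an explicit $\Lip(\phi)M\sqrt{2\log(3/\delta)/n}$ comparator-deviation term. To be fair, the paper's own proof leaks the same factor: after Maurer--Pontil one gets $\frac{\Lip(\phi)}{2}\sigma_n(\favg)\le\frac{\Lip(\phi)}{2}\sigma(\favg)+\Lip(\phi)M\sqrt{\log(3/\delta)/n}$, so the stated $\Lip$-free constant $4M$ also only tracks when $\Lip(\phi)$ is bounded by an absolute constant. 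Your derivation is therefore no weaker than the paper's, but you should replace the claim that ``only the $\Lip(\phi)$-free $M/\sqrt{n}$ piece survives'' by an explicit $\Lip(\phi)$-scaled $O\bigl(M\sqrt{\log(1/\delta)/n}\bigr)$ remainder, or restrict the range of $n$, $\delta$, $\Lip(\phi)$ under which the absorption is valid.
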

In contrast to \eqref{eq:naive_bound}, the bound \eqref{eq:exploss_ineq} is related to the disutility function only through a term proportional to $\sigma(\favg)$. To see the benefit of this suppressed dependence, consider a case where the hypothesis space is a universal approximator (also known as the \textit{realizable case}). Then, the first and second moment of loss population becomes zero, and \cref{thm:avg_optimality} gives an ERM-like expected loss guarantee on the empirical OCE minimizer.

Regarding the bound for the EIM hypothesis, we remark that the non-vanishing term in the bound \eqref{eq:exploss_ineq2} depends on the behavior of the learned hypothesis on the training data only, unlike in \eqref{eq:exploss_ineq}; such discrepancy can help to recover ERM-like bounds under a milder assumption than universal approximability, for inverted OCE measures that make $\sigma_n(\feim)$ small (e.g., inverted entropic risk).

We remark that \cref{lem:oce_var} indicates a potential connection of EOM to the sample variance penalization (SVP) procedure suggested by Maurer and Pontil \cite{maurer09}. Under suitable setups, one can show that the excess expected loss of the SVP hypothesis is $\cO(1/n)$, even when the excess expected loss of ordinary ERM decays no faster than $1/\sqrt{n}$. An interesting open question is whether, and under what conditions, the EOM can provide a similar acceleration. Indeed, we observe that EOM provides a nontrivial acceleration under at least one specific scenario: the stylized example of \cite{maurer09}.

\begin{example}
Consider a hypothesis space consisting of only two hypotheses $\cF = \{f_1, f_2\}$, such that under the presumed data-generating distribution $P$ we have
\begin{align}
    f_1(Z) &= \frac{1}{2}, \qquad f_2(Z) = \begin{cases}
    0 &\:\cdots\: \textup{ w.p. } \frac{1-\epsilon}{2}\\
    1 &\:\cdots\: \textup{ w.p. } \frac{1+\epsilon}{2}
    \end{cases},
\end{align}
for some $\epsilon \in (0,\frac{1}{2})$. We are interested in the probability that EOM erroneously learns $f_2$ and incur the excess risk of size $\epsilon$. More formally, we aim to provide lower and upper bound on the excess risk probability as $\deom \deq \Pr[\oce_n(f_2) \le \oce_n(f_1)]$. If we focus on the case of CVaR, the excess risk probability becomes $\deom = \Pr[X\le \frac{n\alpha}{2}]$ where $X \sim \textup{Bin}(n,\frac{1+\epsilon}{2})$, and analyze the binomial tail to give the following proposition (see \cref{app:pf_propexample} for the proof).
\begin{proposition}[Faster convergence]\label{prop:eom_excess_risk}
There exists an absolute constant $C_1 > \frac{\sqrt{2}}{3}$, such that
\begin{align}
    C_1\exp\left(-4n\left(\epsilon+\bar\alpha\right)^2-\log\sqrt{n\alpha} - \frac{16}{n}\right) \leq \deom \leq \exp\left(-\frac{n(\epsilon+\bar\alpha)^2}{2}\right)
\end{align}
holds for the empirical CVaR minimizer with $\alpha \in (0,1]$.
\end{proposition}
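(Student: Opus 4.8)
The plan is to reduce $\deom$ to an exact binomial tail probability and then bound that tail from both sides. First I would evaluate the two empirical CVaR values in closed form. Since $f_1 \equiv \frac12$, its empirical loss distribution is a point mass and minimizing $\lambda + \frac1{n\alpha}\sum_i[\frac12 - \lambda]_+$ over $\lambda$ gives $\oce_n(f_1) = \frac12$. For $f_2$, let $X \deq \#\{i : f_2(Z_i) = 1\} \sim \mathrm{Bin}(n,q)$ with $q = \frac{1+\epsilon}2$; substituting $\phi(t) = \frac1\alpha[t]_+$ into \eqref{eq:def_empirical_oce} yields an objective that is piecewise linear in $\lambda$ with a single kink, whose minimum is attained at an endpoint and equals
\begin{equation}
\oce_n(f_2) = \min\Big\{\tfrac{X}{n\alpha},\, 1\Big\}.
\end{equation}
Since $\frac12 < 1$, the event $\{\oce_n(f_2) \le \oce_n(f_1)\}$ coincides with $\{X \le \frac{n\alpha}2\}$, establishing $\deom = \Pr[X \le \frac{n\alpha}2]$ as claimed.

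The upper bound is then immediate. The threshold lies below the mean $\E[X] = nq = \frac{n(1+\epsilon)}2$ by exactly $nq - \frac{n\alpha}2 = \frac{n(\epsilon + \bar\alpha)}2$, so applying Hoeffding's inequality to the lower tail of the $[0,1]$-bounded sum $X$ with deviation $\frac{\epsilon+\bar\alpha}2$ yields $\deom \le \exp\big(-2n(\frac{\epsilon+\bar\alpha}2)^2\big) = \exp\big(-\frac{n(\epsilon+\bar\alpha)^2}2\big)$, matching the stated upper bound.

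The matching lower bound is an anti-concentration estimate and is where the real work lies. The plan is to retain only the largest admissible atom: with $m \deq \lfloor n\alpha/2\rfloor$ and $a \deq m/n \le \frac\alpha2$, we have $\deom \ge \Pr[X = m]$, and expanding $\binom nm$ with two-sided (non-asymptotic) Stirling bounds rewrites this as
\begin{equation}
\Pr[X = m] \ge \frac{1}{\sqrt{2\pi n\, a(1-a)}}\,\exp\big(-n\, D(a\,\|\,q) - r_n\big),
\end{equation}
where $D(a\|q)$ is the Bernoulli relative entropy and $r_n = O(1/n)$ collects the Stirling remainders. I would then control the exponent with the $\chi^2$-type inequality $D(a\|q) \le \frac{(a-q)^2}{q(1-q)}$, using crucially that $\epsilon \in (0,\frac12)$ forces $q(1-q) = \frac{1-\epsilon^2}4 \ge \frac3{16}$, hence $D(a\|q) \le \frac{16}3(a-q)^2$. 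Since $q - a$ exceeds $\frac{\epsilon+\bar\alpha}2$ by at most $\frac1n$, a short computation gives $nD(a\|q) \le 4n(\epsilon+\bar\alpha)^2 + \frac{16}n$; the $-\frac{16}n$ term in the statement is in fact exactly calibrated so that the residual quadratic in $(\epsilon+\bar\alpha)$ has a negative discriminant, making this bound hold for every admissible parameter value rather than only in a large-deviation regime. The prefactor is handled by $a(1-a) \le a \le \frac\alpha2$, which gives $\frac{1}{\sqrt{2\pi n a(1-a)}} \ge \frac{1}{\sqrt{\pi n\alpha}}$ and supplies both the $-\log\sqrt{n\alpha}$ term and the constant $C_1$.

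The main obstacle is this reverse direction, and specifically making the constants land as stated. It forces a genuinely non-asymptotic Stirling treatment, since the surviving prefactor constant $C_1$ must be verified to exceed $\frac{\sqrt2}3$ after absorbing the remainder $e^{-r_n}$ into the factor $\frac1{\sqrt\pi}$ obtained above (for $m \ge 1$ one checks $\frac{e^{-r_n}}{\sqrt\pi} > \frac{\sqrt2}3$). Two boundary situations need separate care: the rounding of $n\alpha/2$ down to the integer $m$, whose contribution is exactly the $O(1/n)$ slack folded into the $\frac{16}n$ term via the discriminant calculation; and the degenerate case $m = 0$ (when $n\alpha < 2$), handled by bounding $\deom \ge \Pr[X = 0] = (\frac{1-\epsilon}2)^n$ and comparing directly with the target, which is small in this regime because of the $e^{-4n(\epsilon+\bar\alpha)^2}$ and $e^{-16/n}$ factors. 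Assembling the two directions then gives the stated two-sided bound.
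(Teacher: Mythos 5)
Your proposal follows the paper's own proof almost step for step: the closed-form reduction $\oce_n(f_2)=\min\{X/(n\alpha),1\}$ and hence $\deom=\Pr[X\le n\alpha/2]$ is exactly \cref{lem:eom_excess_risk}; the upper bound is the same Hoeffding application with deviation $\frac{\epsilon+\bar\alpha}{2}$; and the lower bound is the same largest-atom-plus-Stirling argument with a quadratic upper bound on the binary KL divergence. The only real variation is in that last quadratic bound: where the paper proves $\bkl(p\|q)\le 8(p-q)^2$ for $q\in(\frac12,\frac34)$ via a second-order Taylor expansion in $q$ (\cref{lem:kl_quadratic}), you invoke the $\chi^2$ upper bound $\bkl(a\|q)\le (a-q)^2/(q\bar q)\le\frac{16}{3}(a-q)^2$, exploiting $\epsilon<\frac12$ in precisely the same role; both routes then absorb the floor-induced slack $\frac1n$ into $4n(\epsilon+\bar\alpha)^2+\frac{16}{n}$ (your ``negative discriminant'' observation, $nx^2-x+\frac1n\ge0$, checks out, as does the paper's $(a+b)^2\le 2a^2+2b^2$ version), and your explicit Stirling bookkeeping ($r_n\le\frac{1}{12m}+\frac{1}{12(n-m)}\le\frac16$ for $1\le m\le n-1$, prefactor $\ge e^{-1/6}/\sqrt{\pi}\approx 0.478$) lands just above $\frac{\sqrt2}{3}\approx0.471$, comparably to the paper's constant $2\sqrt{\pi}/e^2\approx 0.480$.

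The one genuine gap is your treatment of the degenerate case $m=\lfloor n\alpha/2\rfloor=0$. You are right that it falls outside the Stirling argument (the paper silently assumes $m\ge1$: its prefactor $\sqrt{2\pi n/(e^4 k(n-k))}$ is ill-defined at $k=0$), but your patch---bound $\deom\ge\Pr[X=0]=\big(\frac{1-\epsilon}{2}\big)^n$ and claim the target is small ``because of the $e^{-4n(\epsilon+\bar\alpha)^2}$ and $e^{-16/n}$ factors''---does not go through, and in fact cannot. When $n\alpha<2$ the target $C_1\,e^{-4n(\epsilon+\bar\alpha)^2-16/n}/\sqrt{n\alpha}$ depends on $\alpha$ essentially only through the factor $1/\sqrt{n\alpha}$, which diverges as $\alpha\to0$ with $n$ fixed while $\Pr[X=0]$ stays put. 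Concretely, at $n=1$, $\epsilon=0.01$, $\alpha=e^{-42}$ the exponent is $-4(\epsilon+\bar\alpha)^2+21-16\approx 0.92$, so any admissible $C_1>\frac{\sqrt2}{3}$ makes the claimed lower bound exceed $1\ge\deom$. So the proposition as stated actually fails in this corner, and the only honest repair is to restrict to $n\alpha\ge 2$ (equivalently $m\ge1$), which is what the paper's proof tacitly requires anyway; your instinct to isolate the case was correct, but the asserted resolution should be replaced by this restriction rather than a direct comparison. Everything else in your argument is complete and correct.
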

We observe that $\deom$ can be made less than $\exp(-\frac{n}{2})$ by taking $\alpha \to 0$, regardless of $\epsilon$.
\end{example}

\section{Numerical simulations: Batch-SVP for CVaR minimization}\label{sec:exp}

Recall that \cref{lem:oce_var} implies that the EOM can be relaxed to the SVP, where one aims to find
\begin{align}
    \fsvp = \argmin_{f\in\cF} \left\{R_n(f) + \lambda \cdot \sigma_n(f)\right\},
\end{align}
for some hyperparameter $\lambda \ge 0$. At the same time, the relaxed form enjoys a favorable theoretical properties in terms of generalization \cite{maurer09}, as briefly discussed in the previous section (although requiring a careful choice of $\lambda$). In light of this observation, we explore the potential benefit of using SVP as an additional \textit{simple baseline} for algorithmic studies on OCE minimization, along with a popularly used baseline of batch-based EOM; batch-based empirical CVaR minimization (dubbed \textit{batch-CVaR}) has been used as a baseline in recent algorithmic works on CVaR minimization \cite{curi19,soma20}. As will be shown shortly, we find that batch-based SVP (dubbed \textit{batch-SVP}) can outperform batch-CVaR without an overly sophisticated selection of the hyperparameter $\lambda$.

\textbf{Setup.} We focus on the case of CVaR minimization on CIFAR-10 image classification task \cite{krizhevsky09} where we use the standard cross-entropy loss. As a model, we use ResNet18 \cite{he16}. As an optimizer, we use Adam with weight decay \cite{loschilov19} with a batch size $100$ and PyTorch default learning rate. For CVaR, we have experimented with $\alpha = \{0.2,0.4,0.6,0.8\}$. For batch-SVP, we have simply tested over $\lambda = \{0.5,1.0\}$. All results are averaged over ten independent trials (more details at \cref{app:exp_detail}).

\textbf{Results and discussion.} Trajectories of test and train CVaR for $100$ epochs are given in \cref{fig:svp} for $\alpha = \{0.2,0.8\}$; plots for $\alpha = \{0.4,0.6\}$ are given in \cref{app:exp_detail}. We observe that batch-SVP hypotheses achieve a similar or better performance than batch-CVaR at the best epoch, and have a much more stable learning curve due to the regularization properties of SVP. Moreover, after $\sim 40$ epochs, batch-CVaR start to perform worse than vanilla ERM. Similar phenomenon has been reported by \cite{curi19}, where batch-CVaR (and even other sophisticated methods) underperform the vanilla ERM under a number of settings. The trajectories suggest that such CVaR optimization methods are suffering from over-training. SVP provides a baseline method, which does not have such issues.

\textbf{Code.} Available at \url{https://github.com/jaeho-lee/oce}.

\begin{figure*}[t!]
\centering
\subfigure{
\centering
\includegraphics[width=0.48\textwidth]{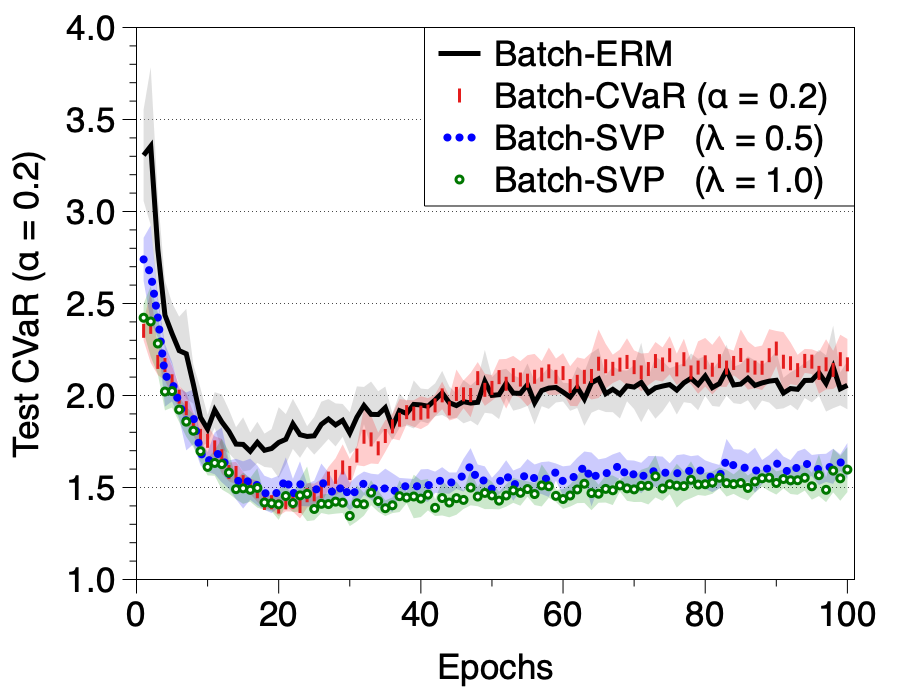}
}
\subfigure{
\centering
\includegraphics[width=0.48\textwidth]{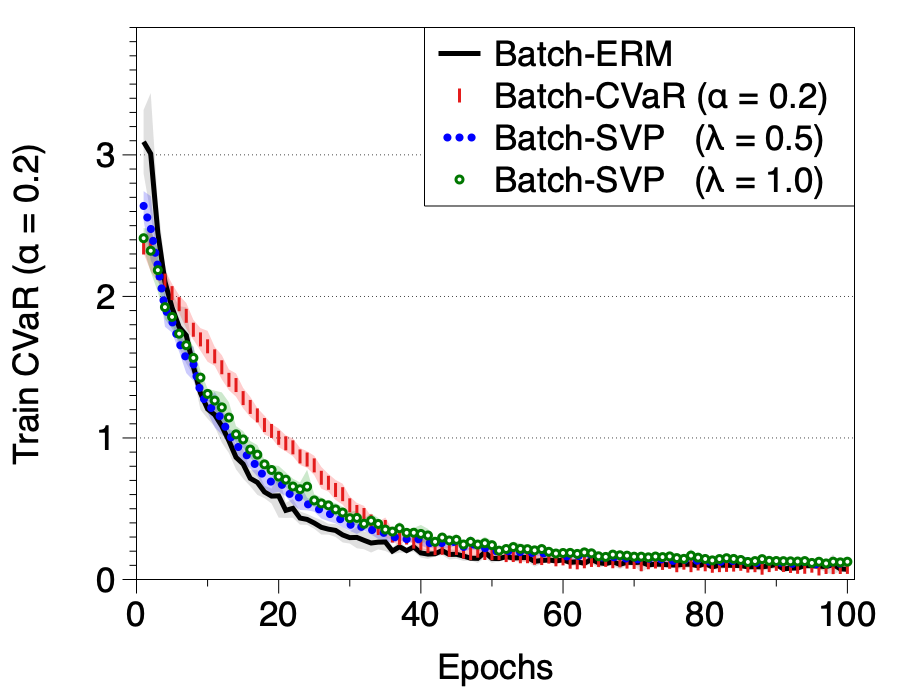}
}
\subfigure{
\centering
\includegraphics[width=0.48\textwidth]{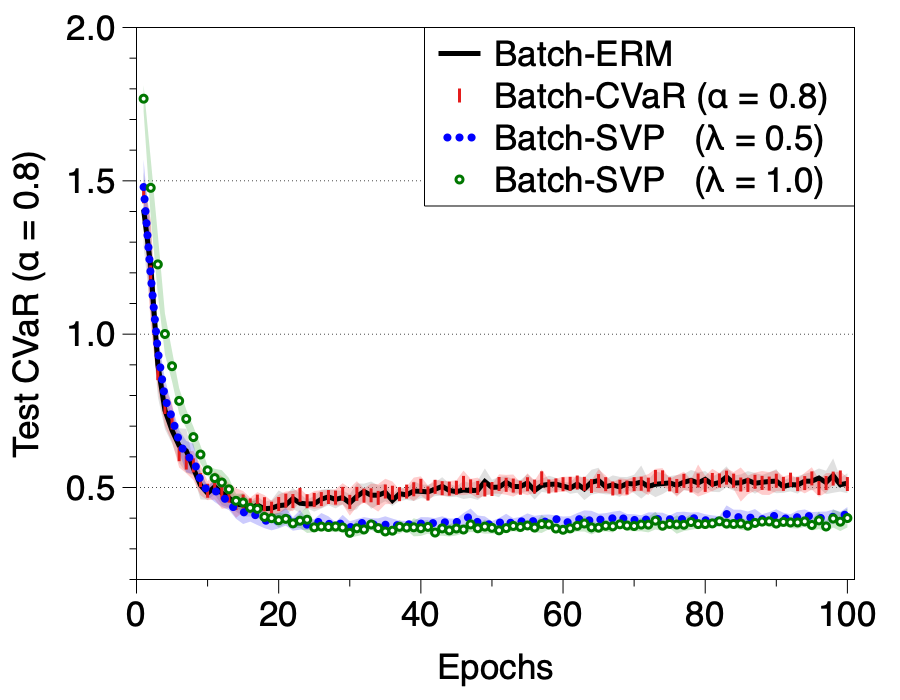}
}
\subfigure{
\centering
\includegraphics[width=0.48\textwidth]{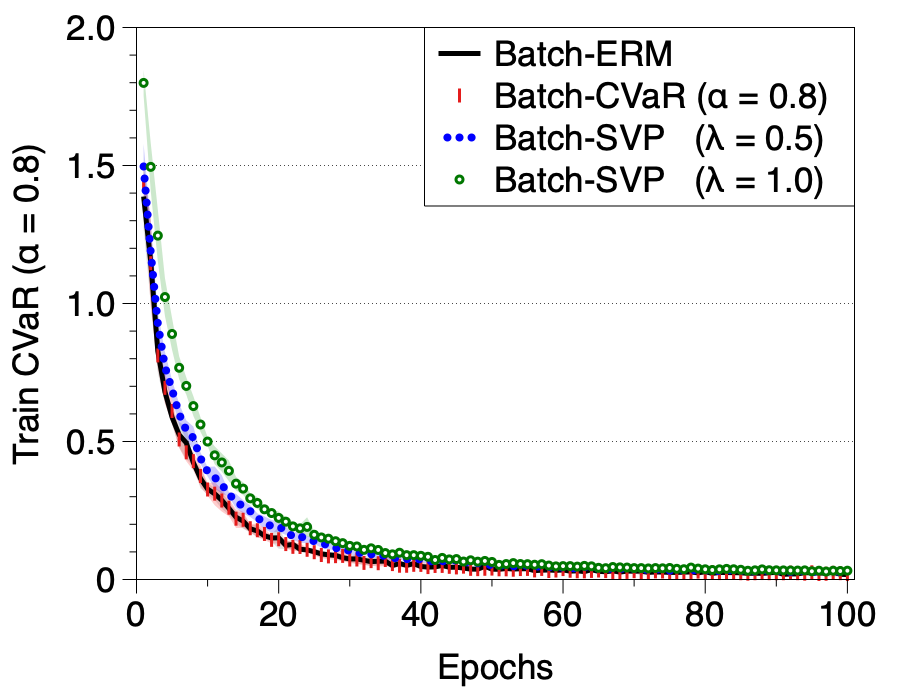}
}
\caption{Trajectories of test/train CVaR (left/right) for hypotheses trained on ResNet18 and CIFAR-10 (Upper row: $\alpha=0.2$, lower row: $\alpha=0.8$). Shaded regions denote the (mean $\pm$ standard deviation) over ten independent trials.}
\label{fig:svp}
\end{figure*}

\section{Summary and future directions} \label{sec:concl}
In this paper, we have (a) presented general theoretical guarantees for risk-sensitive learning (\cref{thm:oce_optimality,thm:avg_optimality}), (b) established a new framework to study risk-seeking learning scheme (\cref{ssec:ioce}), and (c) rejuvenated the sample variance penalization as a baseline algorithm for risk-averse learning (\cref{sec:exp}). As future work, we aim to address the generalization properties of learning algorithms that simultaneously train a hypothesis and a \textit{weighting function}, according to which the hypothesis will be evaluated \cite{ren18,shu19}. Formalizing such scenarios may accompany an investigation of the complexity (e.g., Rademacher averages, VC-dimension) of the space of all possible weighting functions based on a generalized notion of spectral risk measures \cite{acerbi02}.
\newpage
\section*{Broader impacts}\label{sec:broader}
This paper is focused on the subject of \textit{risk-sensitivity}, which is a topic that is deeply intertwined with the safety, reliability, and fairness of machine intelligence (see, e.g., \cite{rudin19}). While our primary aim is to enhance theoretical understandings on the risk-sensitive learning, instead of proposing a new algorithm, we expect our results to have two broader consequences.

\textbf{Facilitating algorithmic advances.} For researchers trying to develop new risk-sensitive learning schemes, our general framework lowers the barrier to do so; we provide performance guarantees that applies for a broad class of algorithms that considers risk-seeking and risk-averse measures of loss (\cref{thm:oce_optimality,thm:avg_optimality}). Also, we provide a non-vacuous baseline to be compared with newly developed algorithms (\cref{sec:exp}). We believe that our theoretical framework will help stimulate further developments on risk-sensitive learning.

\textbf{Pondering on the cost of fairness.} One of our theoretical results (\cref{thm:oce_optimality}) can be interpreted as a characterization of (an instance of) the \textit{cost of fairness} \cite{cd17,donini18,menon18}. Indeed, recalling that CVaR is a fairness risk measure with an individual fairness criterion \cite{williamson19}, \cref{thm:oce_optimality} implies that the performance gap may grow wider if we try to apply a stricter fairness criterion. Such quantification of the cost of making a fairer decision is a double-edged sword; the cost may \textit{scare} the decision-maker away from taking the fairness into account at all, or the cost may \textit{guide} the decision-maker to find a fairest solution under the operational constraints. We sincerely hope that the latter is the case. 
Indeed, we also provide a result (\cref{thm:avg_optimality}) that the drawback of making a fair decision may not be big for modern machine learning applications!
\section*{Acknowledgements}
JL thanks Aolin Xu, Maxim Raginsky, Insu Han, Sungsoo Ahn, and Sihyun Yu for their helpful feedbacks on the early version of the manuscript. JL also acknowledges the comments from an anonymous NeurIPS reviewer, which helped us refine the constant for \cref{lem:unifconv}.

\section*{Funding disclosure}
This work was partly supported by Institute of Information \& Communications Technology Planning \& Evaluation (IITP) grant funded by the Korea government (MSIT, No.2019-0-00075, Artificial Intelligence Graduate School Program (KAIST)), and in part by the Engineering Research Center Program through the National Research Foundation of Korea (NRF) funded by the Korea Government (MSIT, NRF-2018R1A5A1059921).

\bibliography{main.bbl}

\newpage
\appendix

\section{Omitted proofs} \label{app:omitted_pfs}

\subsection{Proof of \cref{prop:icvar}}\label{app:pf_icvar}

We begin by plugging the disutility function $\phi(t) = \frac{n}{k}[t]_+$ to the definition of the inverted OCE risk (\cref{def:roce}) to get
\begin{align}
    \roce^\phi(f;P_n) &= \sup_{\lambda}\Big\{\lambda - \frac{1}{k}\sum_{i=1}^n[\lambda - f(Z_i)]_+\Big\}.
\end{align}
While the optimum-achieving $\lambda$ may not be unique, we observe that $\lambda^* = f(Z_{\pi(k)})$ achieves the optimum. To see this, first observe that increasing $\lambda$ (non-strictly) increases the first term in the curly bracket ($\lambda$) and decreases the second term ($-\frac{1}{k}\sum_{i=1}^k[\lambda - f(Z_i)]_+$). By increasing $\lambda^*$ by $\Delta\lambda$, the second term decreases at least by $\Delta\lambda$, since at least $k$ terms among $\{\lambda - f(Z_i)\}_{i=1}^n$ are nonnegative. Likewise, by decreasing $\lambda^*$ by $\Delta\lambda$, the increment in the second term is no bigger than $\Delta\lambda$.

Plugging in $\lambda^* = f(Z_{\pi(k)})$, we get what we want.

\subsection{Proof of \cref{prop:influence}}\label{app:pf_influence}
We look at each inverted OCE separately.

\textit{1. Inverted entropic risk:} From the elementary calculus, we know that
\begin{align}
    \roce(f)= -\frac{1}{\gamma}\log \E[e^{-\gamma f(Z)}], \qquad \lambda^* = -\frac{1}{\gamma}\log \E[e^{-\gamma f(Z)}].
\end{align}
The corresponding influence function is then given as
\begin{align}
    \IF(z^*) &= \frac{1}{\gamma} \cdot \lim_{\eps \to 0^+} \frac{1}{\eps} \log\left(\frac{\E[e^{-\gamma f(Z)}]}{\bar\eps\cdot \E[e^{-\gamma f(Z)}] + \eps \cdot e^{-\gamma f(z^*)}}\right)\\
    &= -\frac{1}{\gamma} \cdot \lim_{\eps \to 0^+} \frac{1}{\eps}\log\left(\bar\eps + \eps\frac{e^{-\gamma f(z^*)}}{\E [e^{-\gamma f(Z)}]}\right)\\
    &= \frac{1}{\gamma} - \frac{1}{\gamma}\frac{e^{-\gamma f(z^*)}}{\E[e^{-\gamma f(Z)}]}.
\end{align}

\textit{2. Inverted mean-variance:} From the elementary calculus, we know that
\begin{align}
    \roce(f) = R(f) - c\cdot \sigma^2(f), \qquad \lambda^* = R(f).
\end{align}
The corresponding influence function is then given as
\begin{align}
    \IF(z^*) &= \lim_{\eps \to 0^+}\frac{1}{\eps}\left[-\eps\cdot R(f) + \eps \cdot f(z^*) + c\eps\cdot \sigma^2(f) - c\eps\bar\eps \cdot (f(z^*)-R(f))^2 \right]\\
    &= f(z^*) - R(f) + c\left[\sigma^2(f) - (f(z^*) - R(f))^2\right].
\end{align}

\textit{3. Inverted CVaR:} Using the argument similar to \cite{rockafellar02}, we get that
\begin{align}
    \roce(f) = \mathfrak{q}(\alpha;f_{\sharp P}) - \frac{1}{\alpha}\E_P[\mathfrak{q}(\alpha;f_{\sharp P}) - f(Z)]_+, \qquad \lambda^* = \mathfrak{q}(\alpha;f_{\sharp P}).
\end{align}
We denote the $z^*$-contaminated distribution by $\tilde{P}$. Then, the corresponding influence function can be written as
\begin{align}
    \IF(z^*) &= \lim_{\eps \to 0^+} \frac{1}{\eps}\Big[\mathfrak{q}(\alpha;f_{\sharp \tilde P}) - \mathfrak{q}(\alpha;f_{\sharp P}) + \frac{1}{\alpha}\E_P[\mathfrak{q}(\alpha;f_{\sharp P}) - f(Z)]_+ \nonumber\\
    &\qquad\qquad\quad - \frac{\bar\eps}{\alpha}\E_P[\mathfrak{q}(\alpha;f_{\sharp \tilde P}) - f(Z)]_+ - \frac{\eps}{\alpha}[\mathfrak{q}(\alpha;f_{\sharp \tilde P}) - f(z^*)]_+\Big].
\end{align}
By adding and subtracting $\frac{\bar\eps}{\alpha}\E_P[q(\alpha;f_{\sharp P}) - f(Z)]_+$, we can rewrite as
\begin{align}
    &= \frac{1}{\alpha}\E_P[\mathfrak{q}(\alpha;f_{\sharp P}) - f(Z)]_+ - \frac{1}{\alpha}[\mathfrak{q}(\alpha;f_{\sharp P}) - f(z^*)]_+ \nonumber\\
    &\quad+ \lim_{\eps \to 0^+} \frac{1}{\eps}\Big[\mathfrak{q}(\alpha;f_{\sharp \tilde P}) - \mathfrak{q}(\alpha;f_{\sharp P}) + \frac{\bar\eps}{\alpha}\E_P\left[[\mathfrak{q}(\alpha;f_{\sharp P}) - f(Z)]_+ - [\mathfrak{q}(\alpha;f_{\sharp \tilde{P}}) - f(Z)]_+\right] \Big].
\end{align}
Now, notice that the limiting term is $0$ whenever whenever $f_{\sharp P}$ does not have a point mass around $\mathfrak{q}(\alpha;f_{\sharp P})$. Indeed, $[\mathfrak{q}(\alpha;f_{\sharp P}) - f(Z)]_+ - [\mathfrak{q}(\alpha;f_{\sharp \tilde{P}}) - f(Z)]_+$ is zero with probability $1-\alpha$, and $\mathfrak{q}(\alpha;f_{\sharp P}) - \mathfrak{q}(\alpha;f_{\sharp \tilde P})$ with probability $\alpha$. Taking the limit, remaining terms cancel out.

\subsection{Proof of \cref{lem:unifconv}}\label{app:pf_unifconv}
We begin by giving the following technical lemma.
\begin{lemma} \label{lem:search_space}
Suppose that $P, f$ satisfies $f(Z) \in [0,M]$ almost surely for $Z \sim P$. Then, we have
\begin{align}
    \oce(f) &= \min_{\lambda \in [0,M]} \left\{\lambda + \E_P \phi(f(Z) - \lambda)\right\}\label{eq:search_space_oce}\\
    \roce(f) &= \max_{\lambda \in [0,M]} \left\{\lambda - \E_P \phi(\lambda - f(Z))\right\}.\label{eq:search_space_roce}
\end{align}
\end{lemma}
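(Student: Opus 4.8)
The plan is to show that the unconstrained objective defining $\oce(f)$ is monotone outside $[0,M]$, so its infimum over $\Reals$ coincides with the infimum over the compact interval $[0,M]$, where it is attained. Write $g(\lambda) \deq \lambda + \E_P[\phi(f(Z)-\lambda)]$; since $\phi$ is convex and $\lambda \mapsto f(Z)-\lambda$ is affine, $g$ is a closed convex function of $\lambda$. The whole argument pivots on a single consequence of the assumption $1\in\partial\phi(0)$: because the subdifferential of a convex function is monotone, every subgradient of $\phi$ at a point $u>0$ is at least $1$, while every subgradient at $u<0$ is at most $1$. With $f(Z)\in[0,M]$ almost surely, this is exactly what makes $\lambda=0$ and $\lambda=M$ the two turning points of $g$.

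First I would handle $\lambda<0$ and compare $g(\lambda)$ with $g(0)$. For a.e.\ $Z$ we have $f(Z)\ge 0$, so any $s\in\partial\phi(f(Z))$ satisfies $s\ge 1$, and the subgradient inequality yields $\phi(f(Z)-\lambda)-\phi(f(Z)) \ge s\,(-\lambda) \ge -\lambda$ (using $-\lambda>0$); taking expectations gives $g(\lambda)-g(0)\ge 0$. Symmetrically, for $\lambda>M$ I would compare with $g(M)$: now $f(Z)-M\le 0$, so any $s'\in\partial\phi(f(Z)-M)$ satisfies $s'\le 1$, and the subgradient inequality at $f(Z)-M$ gives $\phi(f(Z)-\lambda)-\phi(f(Z)-M)\ge s'(M-\lambda)\ge M-\lambda$ (the last step because $M-\lambda<0$ and $s'\le 1$); adding $\lambda-M$ and taking expectations gives $g(\lambda)-g(M)\ge 0$. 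Combining the two comparisons yields $\inf_{\lambda\in\Reals}g(\lambda)=\inf_{\lambda\in[0,M]}g(\lambda)$.

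It then remains to upgrade this infimum to a minimum. Here I would note that $g$ is lower semicontinuous and proper on $[0,M]$: indeed $g(M)=M+\E_P[\phi(f(Z)-M)]$ is finite because $f(Z)-M\in[-M,0]$, on which $\phi$ is sandwiched as $t\le\phi(t)\le 0$ (using $\phi(t)\ge t$ and the nondecreasing property together with $\phi(0)=0$). A lower semicontinuous function that is proper on the compact set $[0,M]$ attains its infimum there, giving \eqref{eq:search_space_oce}. The $\roce$ identity \eqref{eq:search_space_roce} follows by the mirror-image argument applied to the concave objective $\tilde g(\lambda)\deq\lambda-\E_P[\phi(\lambda-f(Z))]$ (equivalently, via $\roce(f)=-\oce(-f)$): for $\lambda<0$ one has $\lambda-f(Z)<0$, so the relevant subgradients are $\le 1$ and $\tilde g$ is nondecreasing there, while for $\lambda>M$ one has $\lambda-f(Z)>0$, so the subgradients are $\ge 1$ and $\tilde g$ is nonincreasing, confining the maximizer to $[0,M]$.

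The main obstacle, and the reason I avoid a naive derivative computation, is that $\phi$ is only assumed closed convex and may equal $+\infty$ on part of the line and need not be differentiable. Thus the monotonicity of $g$ must be extracted from the subgradient inequalities together with the $1\in\partial\phi(0)$ consequence above, rather than from a pointwise derivative; where $\phi=+\infty$ the comparisons hold trivially by monotonicity, and since $g(M)$ is finite the infimum is finite and the $+\infty$ regions are irrelevant. This also forces the small bookkeeping needed to confirm $g$ is proper so that the infimum is genuinely attained. Everything else is routine, and the two halves (for $\oce$ and $\roce$) are exact reflections of one another.
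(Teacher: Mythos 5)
Your proof is correct and follows essentially the same route as the paper's: both arguments show the objective $\lambda \mapsto \lambda + \E_P\phi(f(Z)-\lambda)$ is nondecreasing to the left of $0$ and to the right of $M$ by combining convexity with $1 \in \partial\phi(0)$ (the paper compares $\zeta(-\eps)$ with $\zeta(0)$ and $\zeta(M+\eps)$ with $\zeta(M)$, exactly your two endpoint comparisons), and then invoke compactness of $[0,M]$ to turn the infimum into a minimum. Your version is if anything slightly more careful than the paper's, since you use lower semicontinuity and properness rather than continuity, which correctly handles disutilities taking the value $+\infty$ on part of $(0,M]$; just note that at the boundary points $f(Z)=0$ or $f(Z)=M$ you should pick the specific subgradient $s=1 \in \partial\phi(0)$ rather than claiming \emph{every} subgradient there is $\ge 1$ (or $\le 1$), since $\partial\phi(0)$ may be a nontrivial interval.
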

\begin{proof}
See \cref{app:pf_search_space}.
\end{proof}
In other word, the search space of the variational parameter $\lambda$ appearing in the definition of OCE risk \eqref{eq:def_oce} can be constrained to a finite length interval, given that the random variable $f(Z)$ is also bounded. Using this result, we can take a closer look at the one-sided deviation; for any $f \in \cF$, we have
\begin{align}
    \oce(f) - \oce_n(f) &= \min_{\lambda \in [0,M]} \left\{\lambda + \E_P\phi(f(Z) - \lambda)\right\} - \min_{\lambda \in [0,M]} \left\{\lambda + \E_{P_n}\phi(f(Z) - \lambda)\right\} \label{eq:oneside_oce_pf_1}\\
    &\leq \max_{\lambda \in [0,M]}\left\{\E_P\phi(f(Z) - \lambda) - \E_{P_n}\phi(f(Z) - \lambda)\right\}, \label{eq:oneside_oce_pf_2}
\end{align}
where the inequality holds by selecting the first $\lambda$ to be identical to the second $\lambda$. Taking supremum over $\cF$ on both sides, we get
\begin{align}
    \sup_{f\in\cF} \left\{\oce(f) - \oce_n(f)\right\} \leq \sup_{g\in\cG} \left\{\E_P[\phi\circ g(Z)] - \E_{P_n}[\phi \circ g(Z)]\right\},
\end{align}
where $\cG \deq \{f(\cdot) - \lambda~|~f\in\cF,\: \lambda \in [0,M]\}$ is a product hypothesis space constructed upon $\cF$ and $[0,M]$. To bound the (one-sided) uniform deviation, we first control its expectation via Rademacher averages. As $\phi(0) = 0$ holds by definition, the contraction principle (see, e.g., \cite[Eq. 4.20]{ledoux91}) gives
\begin{align}
    \E\sup_{g\in\cG} \left\{\E_P[\phi\circ g(Z)] - \E_{P_n}[\phi \circ g(Z)]\right\} &\leq 2\Lip(\phi) \cdot \E\Rad_n(\cG(Z^n)). \label{eq:expectation_term}
\end{align}
Now, the Rademacher average of $\cG$ can be bounded as
\begin{align}
    \E\Rad_n(\cG) &= \E\E_{\epsilon^n}\sup_{f\in\cF,\atop\lambda\in[0,M]}\left(\frac{1}{n}\sum_{i=1}^n \epsilon_i f(Z_i) - \frac{1}{n}\sum_{i=1}^n \epsilon_i \lambda\right)\\
    &\le \E\E_{\epsilon^n}\sup_{f\in\cF}\left(\frac{1}{n}\sum_{i=1}^n \epsilon_i f(Z_i)\right)  + \E_{\epsilon^n}\sup_{\lambda \in [0,M]}\left(\lambda \cdot  \frac{1}{n}\sum_{i=1}^n \epsilon_i\right)\\
    &\le \E\Rad(\cF(Z^n)) + \frac{M}{n}\cdot\E_{\epsilon^n}\left|\sum_{i=1}^n \epsilon_i\right|\\
    &\le \E\Rad(\cF(Z^n)) + \frac{M}{\sqrt{n}}, \label{eq:rad_bound}
\end{align}
where the last line follows from the Jensen's inequality $(\E |X|)^2 \le \E[|X|^2] = \E [X^2]$. Combining \eqref{eq:rad_bound} with \eqref{eq:expectation_term}, we get
\begin{align}
    \E\sup_{g\in\cG} \left\{\E_P[\phi\circ g(Z)] - \E_{P_n}[\phi \circ g(Z)]\right\} &\leq \Lip(\phi)\cdot\left(2\E\Rad(\cF(Z^n)) + \frac{2M}{\sqrt{n}}\right)
\end{align}
Combining with the McDiarmid's inequality to control the residual term, we have
\begin{align}
    &\sup_{g\in\cG} \left\{\E_P[\phi\circ g(Z)] - \E_{P_n}[\phi \circ g(Z)]\right\}\\
    &\leq \Lip(\phi) \left(2\E\Rad_n(\cF(Z^n)) + \frac{M (2 + \sqrt{\log(2/\delta)})}{\sqrt{n}}\right) \qquad \text{w.p. } 1-\frac{2}{\delta}.
\end{align}
The other direction can be derived similarly. Using the union bound, we get what we want.

To get the same bound with $\roce$, we slightly modify \cref{eq:oneside_oce_pf_1,eq:oneside_oce_pf_2} as follows.
\begin{align}
    \roce(f) - \roce_n(f) &= \max_{\lambda \in [0,M]}\left\{\lambda - \E_P\phi(\lambda - f(Z))\right\} - \max_{\lambda \in [0,M]}\left\{\lambda - \E_{P_n}\phi(\lambda - f(Z))\right\}\\
    &\leq \max_{\lambda \in [0,M]}\left\{\E_{P_n}\phi(\lambda - f(Z)) - \E_P\phi(\lambda - f(Z))\right\},
\end{align}
where the inequality holds by selecting the second $\lambda$ to be equal to the first $\lambda$. The remaining steps are identical to the proof of the claim for $\oce$.

\subsection{Proof of \cref{thm:oce_optimality}} \label{app:pf_oceopt}
The claim is a direct consequence of \cref{lem:unifconv}. Indeed, we can proceed as
\begin{align}
    &\oce(\feom) - \oce(\foce) \nonumber\\
    &= \big[\oce(\feom) - \oce_n(\feom)\big] + \underbrace{\big[\oce_n(\feom) - \oce_n(\foce)\big]}_{\le 0} + \big[\oce_n(\foce) - \oce(\foce)\big],
\end{align}
where the nonpositivity of second term follows from the definition of $\feom$. The remaining terms can be bounded via \cref{lem:unifconv} to get the claimed result.

The proof for $\feim$ can be done equivalently, by
\begin{align}
    &\roce(\feim) - \roce(\fioce) \nonumber\\
    &= \big[\roce(\feom) - \roce_n(\feom)\big] + \underbrace{\big[\roce_n(\feom) - \roce_n(\fioce)\big]}_{\le 0} + \big[\roce_n(\fioce) - \roce(\fioce)\big],
\end{align}

\subsection{Proof of \cref{prop:oceavg}} \label{app:pf_oceavg}
To get $0 \le \roce(f)$, plug in $\lambda = 0$ to \cref{def:roce} to observe that
\begin{align}
    \roce(f) \ge -\E_P\phi(-f(Z)) \ge -\E_P\phi(0) = 0,
\end{align}
where the second inequality holds as $\phi$ is a nondecreasing function.

To get $\roce(f) \le R(f)$, we first observe that $\phi(t) \ge t$ holds for all $t$, as $\phi$ is a convex function with $\phi(0) = 0, 1 \in \partial \phi(0)$. Thus, we get
\begin{align}
    \roce(f) \le \sup_{\lambda \in \Reals}\left\{\lambda - \E_P[\lambda - f(Z)]\right\} = \E_Pf(Z) = R(f).
\end{align}
To get $R(f) \le \oce(f)$, use again that $\phi(t) \ge t$ to proceed as
\begin{align}
    \inf_{\lambda \in \Reals} \left\{\lambda + \E_P\phi(f(Z) - \lambda)\right\} \ge \inf_{\lambda \in \Reals} \left\{\lambda + \E_P[f(Z) - \lambda]\right\} = R(f).
\end{align}
To get $\oce(f) \le \Lip(\phi)\cdot R(f)$, observe that
\begin{align}
    \inf_{\lambda \in \Reals} \left\{\lambda + \E_P\phi(f(Z) - \lambda)\right\} \le \E_P\phi(f(Z)) \le \Lip(\phi)\cdot \E_P |f(Z)-0| = \Lip(\phi)\cdot R(f),
\end{align}
where for the first inequality we plugged in the special case $\lambda = 0$.

\subsection{Proof of \cref{lem:oce_var}} \label{app:pf_ocevar}
We first prove the bounds for $\oce$. To get the lower bound, we start from \cref{lem:search_space} and proceed as
\begin{align}
    \inf_{\lambda \in [0,M]}\left\{ \lambda + \E_P \phi(f(Z)-\lambda)\right\} &\ge \inf_{\lambda \in [0,M]}\left\{ \lambda + \E_P[f(Z)-\lambda] + C_{\phi} \cdot \E_P(f(Z) - \lambda)^2\right\}\\
    &= R(f) + C_{\phi} \cdot \inf_{\lambda \in [0,M]}\E_P(f(Z) - \lambda)^2\\
    &= R(f) + C_{\phi} \cdot \sigma^2(f),
\end{align}
where the inequality holds by the definition of $C_\phi$. To get the upper bound, plug in $\lambda = R(f)$ to the variational definition \eqref{eq:def_oce} and observe that
\begin{align}
    \oce(f) &\le R(f) + \E_P\phi(f(Z)-R(f))\\
    &\le R(f) + \Lip(\phi)\cdot \E_P[f(Z)-R(f)]_+\\
    &= R(f) + \frac{\Lip(\phi)}{2}\cdot \E_P|f(Z)-R(f)|\\
    &\leq R(f) + \frac{\Lip(\phi)}{2}\cdot \sqrt{\E_P(f(Z)-R(f))^2},
\end{align}
where the last line holds by the Jensen's inequality.

The bounds for $\roce$ can be proved similarly. For the lower bound, we plug in $\lambda = R(f)$ to get
\begin{align}
    \roce(f) &\geq R(f) - \E_P\phi(R(f) - f(Z))\\
    &\geq R(f) - \Lip(\phi)\cdot\E_P|R(f)-f(Z)|_+\\
    &\geq R(f) - \frac{\Lip(\phi)}{2}\sigma(f).
\end{align}
For the upper bound, we use the definition of $C_\phi$ to get
\begin{align}
    \sup_{\lambda \in [0,M]}\left\{\lambda - \E_P \phi(\lambda - f(Z))\right\} &\le \sup_{\lambda \in [0,M]}\left\{\lambda - \E_P[\lambda-f(Z)] - C_{\phi}\cdot \E_P (\lambda - f(Z))^2\right\}\\
    &= R(f) - C_{\phi} \cdot \inf_{\lambda \in [0,M]}\E_P (\lambda - f(Z))^2\\
    &= R(f) - C_{\phi }\cdot \sigma^2(f)
\end{align}

\subsection{Proof of \cref{thm:avg_optimality}} \label{app:pf_avgopt}
To prove the first claim, we start with \cref{lem:oce_var} to proceed as
\begin{align}
    R(\feom) &\le R_n(\feom) + \sup_{f\in\cF}|R(f) - R_n(f)|\\
    &\le \oce_n(\feom) - C_\phi\sigma^2_n(\feom) + \sup_{f\in\cF}|R(f) - R_n(f)|\\
    &\le \oce_n(\favg) - C_\phi\sigma^2_n(\feom) + \sup_{f\in\cF}|R(f) - R_n(f)|\\
    &\le R_n(\favg) + \frac{\Lip(\phi)}{2}\sigma_n(\favg) - C_\phi\sigma^2_n(\feom) + \sup_{f\in\cF}|R(f) - R_n(f)|\\
    &\le R(\favg) + \frac{\Lip(\phi)}{2}\sigma_n(\favg) - C_\phi\sigma^2_n(\feom) + 2\sup_{f\in\cF}|R(f) - R_n(f)|.
\end{align}
As $\favg$ is a fixed object independent of the samples (for given $P$), Theorem 10 of \cite{maurer09} implies
\begin{align}
    \sigma_n(\favg) - \sigma(\favg) \le M\sqrt{\frac{2\log(3/\delta)}{n-1}} \le 2M\sqrt{\frac{\log(3/\delta)}{n}}, \qquad \text{ w.p. } 1-\frac{\delta}{3}.
\end{align}
Combining with standard symmetrization bounds on uniform deviation (see, e.g., \cite{sss14}) with excess risk probability $2\delta/3$, we get the first claim.

The second claim can be proved similarly, but without invoking the concentration of sample variance. We proceed as follows.
\begin{align}
    R(\feim) &\le R_n(\feim) + \sup_{f\in\cF}|R(f)-R_n(f)|\\
    &\le \roce_n(\feom) + \Lip(\phi)\cdot \sigma_n(\feom) + \sup_{f\in\cF}|R(f) - R_n(f)|\\
    &\le \roce_n(\favg) + \Lip(\phi)\cdot \sigma_n(\feom) + \sup_{f\in\cF}|R(f) - R_n(f)|\\
    &\le R_n(\favg) + \Lip(\phi)\cdot \sigma_n(\feom) + \sup_{f\in\cF}|R(f) - R_n(f)|\\
    &\le R(\favg) + \Lip(\phi)\cdot \sigma_n(\feom) + 2\sup_{f\in\cF}|R(f) - R_n(f)|.
\end{align}
Plugging in the standard Rademacher average bound, we get what we want.

\subsection{Proof of \cref{prop:eom_excess_risk}} \label{app:pf_propexample}
We begin by observing that $\deom$ for CVaR reduces to the binomial tail probability.
\begin{lemma}\label{lem:eom_excess_risk}
Let $\deom$ be the excess risk probability of empirical CVaR minimization for some $\alpha \in (0,1)$. Then, we have $\deom = \Pr\big[X \le \frac{n\alpha}{2}\big]$, where $X \sim \textup{Bin}\big(n,\frac{1+\epsilon}{2}\big)$.
\end{lemma}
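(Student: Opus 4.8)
The plan is to evaluate both empirical CVaR values explicitly and then read off the event $\{\oce_n(f_2) \le \oce_n(f_1)\}$ as a binomial tail. First I would dispose of $\oce_n(f_1)$: since $f_1 \equiv \tfrac12$ on every sample, the shift-additivity property together with $\oce_n(0)=0$ gives $\oce_n(f_1) = \tfrac12$ for every $\alpha \in (0,1]$. (Equivalently, one may directly minimize $\lambda + \frac{1}{n\alpha}\sum_i [\tfrac12 - \lambda]_+$ over $\lambda$ and verify that the value $\tfrac12$ is attained.)

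The substantive step is to compute $\oce_n(f_2)$. Writing $X \deq \#\{i : f_2(Z_i) = 1\} \sim \textup{Bin}\big(n,\tfrac{1+\epsilon}{2}\big)$, the empirical loss distribution of $f_2$ places mass $\tfrac{X}{n}$ at $1$ and $\tfrac{n-X}{n}$ at $0$. By \cref{lem:search_space} the infimum defining $\oce_n(f_2)$ may be restricted to $\lambda \in [0,1]$, on which $[f_2(Z_i)-\lambda]_+$ equals $1-\lambda$ for the $X$ samples taking value $1$ and vanishes for the rest. The objective therefore reduces to the piecewise-linear map $\lambda \mapsto \lambda + \frac{X}{n\alpha}(1-\lambda)$, whose slope is $1 - \frac{X}{n\alpha}$. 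Minimizing over $[0,1]$ — choosing $\lambda=0$ when the slope is nonnegative and $\lambda=1$ when it is negative — I would obtain
\begin{align}
\oce_n(f_2) = \min\left\{\frac{X}{n\alpha},\, 1\right\},
\end{align}
the value $\frac{X}{n\alpha}$ being attained when $X \le n\alpha$ and the value $1$ when $X \ge n\alpha$.

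Finally I would combine the two computations. The excess-risk event is $\{\oce_n(f_2) \le \tfrac12\}$, and since $\tfrac{n\alpha}{2} < n\alpha$ the clipping at $1$ plays no role in this regime: the inequality $\oce_n(f_2) \le \tfrac12$ holds precisely when $\frac{X}{n\alpha} \le \tfrac12$, i.e. $X \le \tfrac{n\alpha}{2}$. Hence $\deom = \Pr\big[X \le \tfrac{n\alpha}{2}\big]$, as claimed.

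The calculation is essentially routine once the optimizing $\lambda$ is identified; the only points requiring care are the minimization of the piecewise-linear objective for $f_2$ — selecting the correct branch according to the sign of $1 - \frac{X}{n\alpha}$, and invoking \cref{lem:search_space} so that $\lambda$ need not be searched outside $[0,1]$ — and confirming that the threshold $\tfrac{n\alpha}{2}$ lies strictly below the clipping point $n\alpha$, so that the event collapses cleanly onto the unclipped branch without boundary ambiguity.
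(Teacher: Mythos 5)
Your proof is correct and takes essentially the same route as the paper's: both invoke \cref{lem:search_space} to restrict $\lambda$ to $[0,1]$, reduce $\oce_n(f_2)$ to the linear-in-$\lambda$ objective $\lambda + \frac{X}{n\alpha}(1-\lambda)$ with minimum $\min\big\{1,\frac{X}{n\alpha}\big\}$, and compare against $\oce_n(f_1)=\tfrac12$. Your added remarks — the endpoint analysis via the sign of $1-\frac{X}{n\alpha}$ and the observation that the clipping at $1$ is inactive on the event $\big\{\oce_n(f_2)\le\tfrac12\big\}$ — merely make explicit what the paper leaves implicit.
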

\begin{proof}
See \cref{app:pf_lemexample}.
\end{proof}
To get the upper bound, observe that \cref{lem:eom_excess_risk} can also be stated in the following form: the excess risk probability of the empirical CVaR minimization is
\begin{align}
    \deom = \Pr\left[\frac{1}{n}\sum_{i=1}^n X_i \le \frac{\alpha}{2}\right], \qquad \text{ where } X_i \sim \Bern\left(\frac{1+\epsilon}{2}\right).
\end{align}
Then, the upper bound follows from the Hoeffding's inequality.

For the lower bound, we bound the binomial tail from below by the largest term in the binomial sum. Using the Stirling's approximation, we have for any $k \le n$,
\begin{align}
    \Pr\left[X \le k\right] &\geq \binom{n}{k}\left(\frac{1+\epsilon}{2}\right)^{k}\left(\frac{1-\epsilon}{2}\right)^{n - k} \geq \sqrt{\frac{2\pi n}{e^4k(n-k)}}\exp\left(-n  \bkl\left(\frac{k}{n}\Big\|\frac{1+\epsilon}{2}\right)\right), \label{eq:pf_after_stirling}
\end{align}
where $\bkl(p\|q)$ denotes the binary Kullback-Leibler (KL) divergence. To complete the lower bound, we note the quadratic upper and lower bound on the binary KL divergence.
\begin{lemma}\label{lem:kl_quadratic}
For any $p, q \in (0,1)$, we have $\bkl(p\|q) \ge 2(p-q)^2$. If we further assume $q \in \big(\frac{1}{2},\frac{3}{4}\big)$, then we also have $\bkl(p\|q) \le 8(p-q)^2$.
\end{lemma}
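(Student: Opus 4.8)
The plan is to prove both inequalities from a single second-order (integral) representation of the binary KL divergence. Writing $F(p) \deq \bkl(p\|q)$, one computes $F'(p) = \log\frac{p(1-q)}{q(1-p)}$, which vanishes at $p=q$, and $F''(p) = \frac{1}{p(1-p)}$. Taylor's theorem with integral remainder, using $F(q)=F'(q)=0$, then yields the clean formula
$$\bkl(p\|q) = \int_q^p \frac{p-t}{t(1-t)}\,\d t,$$
in which the sign of $(p-t)$ always matches the direction of integration, so the integrand is nonnegative regardless of whether $p>q$ or $p<q$. The entire argument then reduces to pinching the curvature factor $\frac{1}{t(1-t)}$ between constants.

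For the lower bound I would use the universal inequality $t(1-t)\le \frac14$, so that $\frac{1}{t(1-t)}\ge 4$ on all of $(0,1)$; substituting this into the representation gives $\bkl(p\|q)\ge 4\int_q^p (p-t)\,\d t = 2(p-q)^2$, which is exactly the binary Pinsker inequality and needs no restriction on $q$.

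The upper bound is where the work lies, since the matching curvature estimate $\frac{1}{t(1-t)}\le 16$ holds only on the central interval $[a,1-a]$ with $a \deq \frac12-\frac{\sqrt3}{4}$. Because the hypothesis $q\in(\frac12,\frac34)$ places $q$ strictly inside $(a,1-a)$ (note $1-a\approx 0.933>\frac34$), whenever $p$ also lies in $[a,1-a]$ the whole integration path stays inside this interval and the representation gives $\bkl(p\|q)\le 16\int_q^p(p-t)\,\d t = 8(p-q)^2$ directly. I would then handle the two tails $p\in(0,a)$ and $p\in(1-a,1)$ by a concavity argument on $h(p)\deq 8(p-q)^2-\bkl(p\|q)$: since $h''(p)=16-\frac{1}{p(1-p)}<0$ on each tail, $h$ is concave there and so lies above the chord joining its endpoint values, whence it suffices to verify nonnegativity at the four points $0,a,1-a,1$. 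The values $h(a),h(1-a)\ge0$ come for free from the central case, so the crux is checking $h(0)=8q^2+\log(1-q)\ge0$ and $h(1)=8(1-q)^2+\log q\ge0$ for $q\in(\frac12,\frac34)$; both follow from elementary monotonicity, since each is a single-variable function whose derivative keeps one sign on $[\frac12,\frac34]$, so the minimum sits at an endpoint and is positive.

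The main obstacle I anticipate is precisely this tail analysis for the upper bound: the clean integral estimate breaks down once $p$ leaves $[a,1-a]$, because $t(1-t)$ dips below $\frac{1}{16}$, and it is here that the restriction $q<\frac34$ becomes essential — it is exactly what keeps the endpoint quantity $8q^2+\log(1-q)$ positive and prevents the divergence $\log(1-q)\to-\infty$ from overwhelming the quadratic term near $p=0$.
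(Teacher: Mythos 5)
Your proposal is correct, and while your lower bound coincides with the paper's, your upper bound takes a genuinely different route. The lower bound is the same Pinsker-type curvature argument the paper uses (Taylor expansion in $p$ about $q$ with $\partial_p^2\, \bkl(p\|q) = \frac{1}{p(1-p)} \ge 4$), merely phrased with the integral rather than the Lagrange remainder. For the upper bound, the paper instead expands $\bkl(p\|\cdot)$ in the \emph{second} argument about the point $p$, and bounds $\partial_q^2\,\bkl(p\|q) = \frac{\bar p}{\bar q^2} + \frac{p}{q^2} \le 16\bar p + 4p \le 16$ for $q \in (\frac12,\frac34)$ --- a two-line argument. That brevity comes at a price which your tail analysis implicitly exposes: the Lagrange remainder is evaluated at an intermediate point $q^*$ between $p$ and $q$, and the paper's bound of $16$ on the second derivative is justified only for evaluation points inside $(\frac12,\frac34)$; when $p$ lies outside $[\frac12,\frac34]$, the point $q^*$ can leave that interval (for $p$ near $0$ and $q^*$ near $p$, the term $\frac{p}{(q^*)^2}$ alone can exceed $16$), so the paper's argument is fully rigorous only after a supplementary case analysis of exactly the kind you carry out. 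Your version --- the curvature bound $\frac{1}{t(1-t)} \le 16$ on the central interval $[\frac12-\frac{\sqrt3}{4},\,\frac12+\frac{\sqrt3}{4}]$, together with concavity of $h(p) = 8(p-q)^2 - \bkl(p\|q)$ on each tail (where $h''(p) = 16 - \frac{1}{p(1-p)} < 0$), reducing everything to the endpoint checks $8q^2 + \log\bar q \ge 0$ and $8\bar q^2 + \log q \ge 0$ --- is airtight for all $p \in (0,1)$, which is what the lemma asserts; your monotonicity claim for the endpoint functions is also correct, since $16 q \bar q \ge 3 > 1$ on $[\frac12,\frac34]$ forces each derivative to keep one sign, giving the positive minima $2 - \log 2$ at $q = \frac12$ and $\frac12 - \log\frac43$ at $q = \frac34$, respectively. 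In short: the paper buys brevity by expanding in $q$, while you buy uniform validity in $p$ at the cost of an explicit (but elementary) tail computation.
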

\begin{proof}
See \cref{app:pf_kl_quadratic}.
\end{proof}
From \cref{lem:kl_quadratic}, the binary KL divergence of our interest can be upper bounded as
\begin{align}
    \bkl\left(\frac{\lfloor\frac{n\alpha}{2}\rfloor}{n}\bigg\|\frac{1+\epsilon}{2}\right) \le 8\left(\frac{1+\epsilon}{2}-\frac{\lfloor\frac{n\alpha}{2}\rfloor}{n}\right)^2 \le 8\left(\frac{\epsilon+\bar\alpha}{2} + \frac{1}{n}\right)^2 \le 4(\epsilon+\bar\alpha)^2 + \frac{16}{n^2}, \label{eq:pf_after_bkl}
\end{align}
where the last inequality holds by the Jensen's inequality. Combining \cref{eq:pf_after_bkl} with \cref{eq:pf_after_stirling}, we can proceed as
\begin{align}
    \Pr\left[X \le \frac{n\alpha}{2}\right] &\geq \sqrt{\frac{2\pi n}{e^4\lfloor\frac{n\alpha}{2}\rfloor(n-\lfloor\frac{n\alpha}{2}\rfloor)}}\exp\left(-4n(\epsilon+\bar\alpha)^2 - \frac{16}{n}\right)\\
    &\geq \sqrt{\frac{4\pi}{e^4}}\exp\left(-4n(\epsilon+\bar\alpha)^2 - \frac{16}{n} - \log\sqrt{n\alpha}\right).
\end{align}

\subsection{Proof of \cref{lem:search_space}} \label{app:pf_search_space}
We first prove \cref{eq:search_space_oce}. For simplicity, we use the shorthand notation
\begin{align}
    \zeta(\lambda) \deq \lambda + \E_P \phi(f(Z)-\lambda).
\end{align}
Note that $\zeta$ is a continuous function, as the convexity of $\phi$ implies the convexity (and continuity) of $\E\phi$. We prove the claim by contradiction; for any (supposedly optimal) $\lambda^* \notin [0,M]$, we show that there exists a corresponding $\tilde{\lambda} \in [0,M]$ such that $\zeta(\tilde\lambda) \leq \zeta(\lambda^*)$.

\paragraph{Case ($\lambda^* > M$).} We show that $\zeta(M + \eps) \geq \zeta(M)$ for any $\eps > 0$. Indeed, by considering a negative random variable $X \deq f(Z) - M$ lying in the interval $[-M,0]$, we get
\begin{align}
    \zeta(M + \eps) &= M + \eps + \E\phi(X - \eps) \geq M + \eps + \E\phi(X) - \eps = \zeta(M),
\end{align}
where the inequality holds because $\phi$ is a convex function having $1$ as a subgradient at $0$.

\paragraph{Case ($\lambda^* < 0$).} Similarly, we show $\zeta(-\eps) \geq \zeta(0)$ for any $\eps > 0$. By considering a positive random variable $X \deq f(Z)$, we get
\begin{align}
    \zeta(-\eps) = -\eps + \E\phi(X + \eps) \geq \E\phi(X) = \zeta(0),
\end{align}
where the inequality follows from the fact that $\phi$ is a convex function having $1$ as a subgradient at $0$.

With $\zeta$ being a continuous function and $[0,M]$ being a compact search space, we can replace the infimum by minimum. \cref{eq:search_space_roce} can be proved in a similar manner.

\subsection{Proof of \cref{lem:eom_excess_risk}} \label{app:pf_lemexample}
We begin by introducing the shorthand notation $X \deq \sum_{i=1}^n f_2(Z_i)$. From the setup, we know that $X \sim \Bin(n,\frac{1+\epsilon}{2})$. From \cref{lem:search_space}, we can proceed as
\begin{align}
    \oce_n(f_2) &= \min_{\lambda \in [0,1]} \Big\{\lambda + \frac{1}{n\alpha} \sum_{i=1}^n [f(Z_i) - \lambda]_+\Big\}\\
    &= \min_{\lambda \in [0,1]} \Big\{\lambda + \frac{1-\lambda}{n\alpha}X\Big\}\\
    &= \min\Big\{1,\frac{X}{n\alpha}\Big\}.
\end{align}
Thus, $\oce_n(f_2) \le \oce_n(f_1) = \frac{1}{2}$ holds if and only if $X \le \frac{n\alpha}{2}$.

\subsection{Proof of \cref{lem:kl_quadratic}}\label{app:pf_kl_quadratic}
To get the lower bound, we view $\bkl(p\|q)$ as a function of $p$ and use the Taylor's theorem. The partial derivatives of the binary KL divergence with respect to $p$ are as follows.
\begin{align}
    \frac{\partial \bkl(p\|q)}{\partial p} = \log\frac{p\bar q}{\bar p q}, \qquad \frac{\partial^2 \bkl(p\|q)}{\partial p^2} = \frac{1}{p} + \frac{1}{\bar p}.
\end{align}
Note that as $p \in (0,1)$, the second derivative is bounded from below by $4$. Evaluating $\bkl(\cdot\|q)$ at $q$, we have for some $p^*$ in the interval between $p$ and $q$,
\begin{align}
    \bkl(p\|q) &= \bkl(q\|q) + \frac{\partial \bkl(\cdot\|q)}{\partial p}(q)(p-q) + \frac{1}{2}\frac{\partial \bkl(\cdot\|q)}{\partial p}(p^*)(p-q)^2 \ge 2(p-q)^2. \label{eq:taylor_remainder}
\end{align}
To get the upper bound, we view $\bkl(p\|q)$ as a function of $q$ and use the Taylor's theorem again. The partial derivatives with respect to $q$ are
\begin{align}
    \frac{\partial \bkl(p\|q)}{\partial q} = \frac{\bar p }{\bar q} - \frac{p}{q}, \qquad \frac{\partial^2 \bkl(p\|q)}{\partial q^2} = \frac{\bar p }{\bar{q}^2} + \frac{p}{q^2}.
\end{align}
Given that $q \in (\frac{1}{2},\frac{3}{4})$, we know that the second derivative is uniformly upper bounded as
\begin{align}
    \frac{\bar p}{\bar{q}^2} + \frac{p}{q^2} \le 16\bar p + 4 p \le 16.
\end{align}
Evaluating $\bkl(p\|\cdot)$ at $p$ in the same manner as \cref{eq:taylor_remainder}, we get the upper bound.

\newpage
\section{Additional plots and other experimental details}\label{app:exp_detail}
We now provide extra experimental details that are not given in \cref{sec:exp}. Unless stated otherwise, we follow PyTorch default parameters. One may also find our (primitive) PyTorch-based implementation at the following URL: \url{https://github.com/jaeho-lee/oce}.

\paragraph{Dataset.} We use CIFAR-10 image classification dataset \cite{krizhevsky09}, normalized using the constants $(0.4914,0.4822,0.4465),(0.247,0.243,0.261)$. We used random cropping (with padding of $4$) and random horizontal flipping for augmentation.
\paragraph{Optimization.} We use mini-batch gradient descent, i.e. sampling without replacement until every samples are drawn.

\begin{figure*}[t!]
\centering
\subfigure{
\centering
\includegraphics[width=0.45\textwidth]{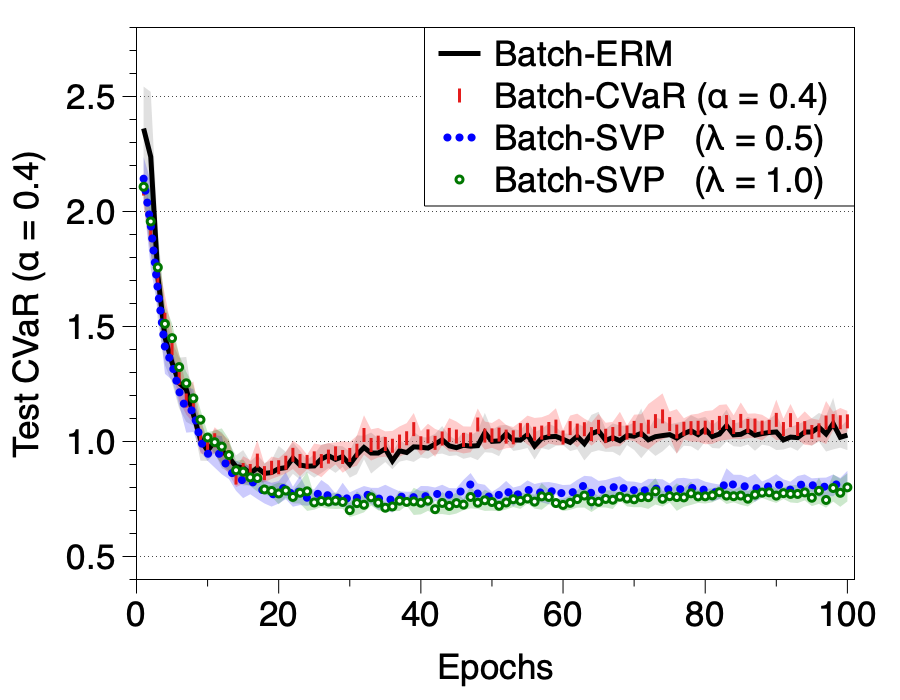}
}
\subfigure{
\centering
\includegraphics[width=0.45\textwidth]{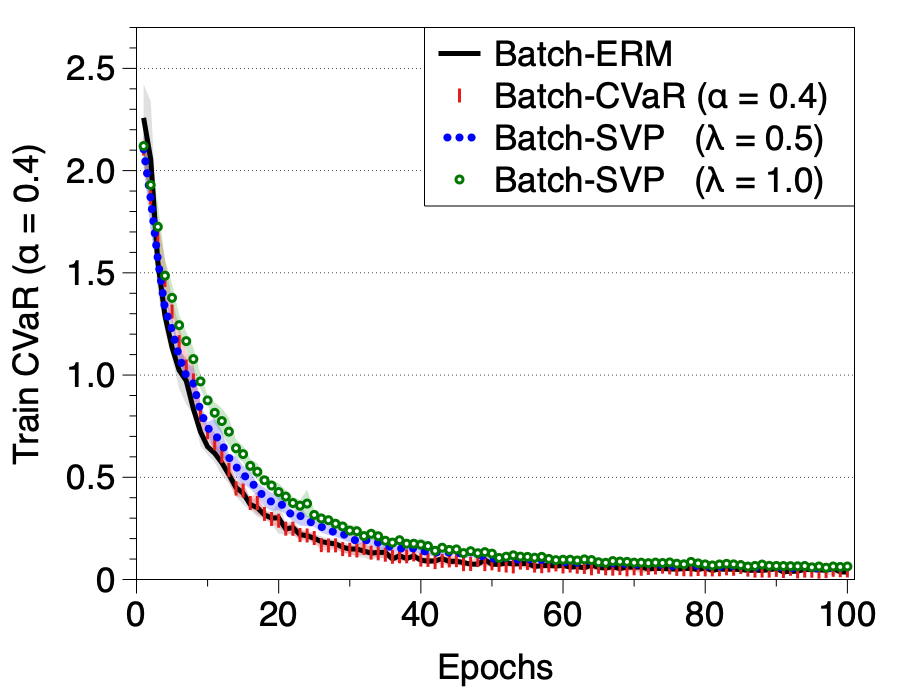}
}
\subfigure{
\centering
\includegraphics[width=0.45\textwidth]{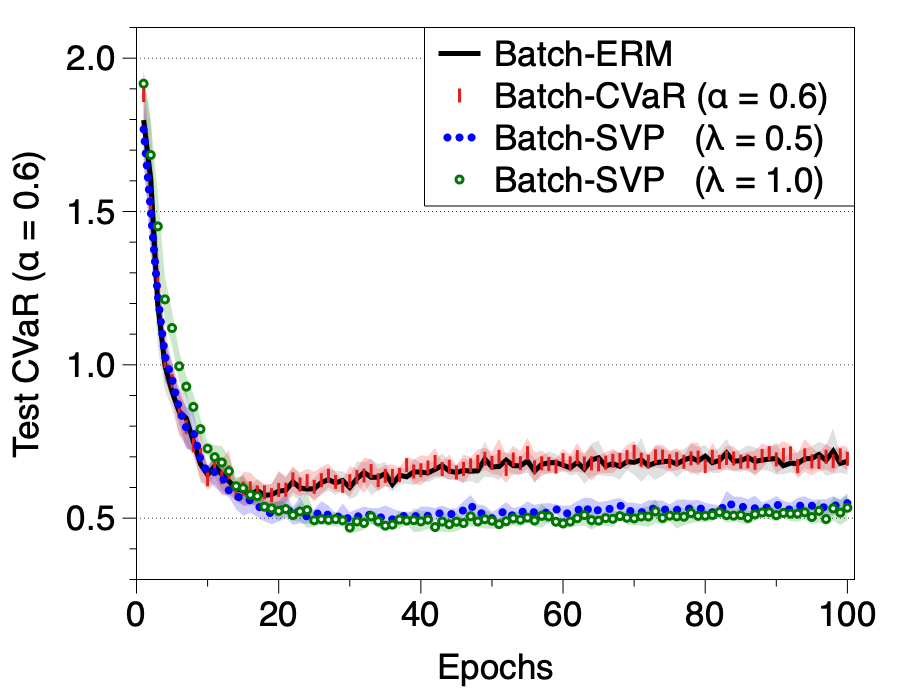}
}
\subfigure{
\centering
\includegraphics[width=0.45\textwidth]{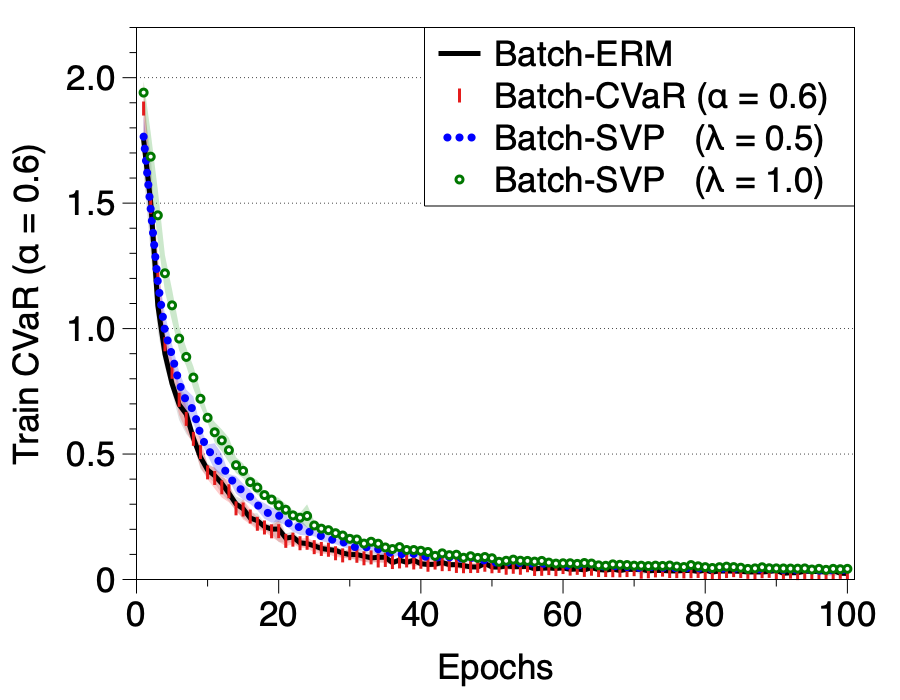}
}
\caption{Trajectories of test/train CVaR (left/right) for hypotheses trained on ResNet18 and CIFAR-10 ($\alpha=\{0.4,0.6\}$).}
\label{fig:svp2}
\end{figure*}
\newpage
\section{Related work}\label{sec:related}
Here, we give a slightly extended overview of the related work, in addition to what has been already introduced in the main text. In particular, we focus on the following three topics: optimization of OCE risks, comparisons with another risk-sensitivity framework, and connections to the algorithmic fairness literature. 

\textbf{Optimization of OCE.} The OCE risk measures belong to a wider class of convex risk measures \cite{follmer02}, which was originally proposed as a relaxation of the notion of coherent risk measures \cite{artzner99}. Under classic setups equipped with convexity assumptions on the loss function, the fact that ``the composite function of convex functions are also convex'' enables one to use standard optimization techniques developed for the expected loss. In modern machine learning applications which accompanies batch-based nonconvex optimization, the optimization can be done with some additional tricks. In \cite{curi19}, the authors propose to use DPP-based techniques for a more accurate estimation of the conditional value-at-risk. In \cite{li20}, the authors give a stochastic optimization algorithm which often outperforms the batch-based version.

\textbf{Comparison with rank-based measures.} The utility-theoretic framework of OCE risks (and their inverses) is complementary to another class of risk measures revolving around the quantile function of the loss population. Known as \textit{spectral risk measures} \cite{acerbi02} in the financial mathematics and as \textit{L-statistics} (see, e.g., \cite{huber09,khim20}) in the statistics literature, the quantile-based approach focuses on the risk measures that can be written as
\begin{align}
    M_\psi(f,P) = \int_0^1 \psi(t)\cdot \mathfrak{q}(t;f_{\sharp P}) \d t,
\end{align}
for some \textit{weighting function} $\phi:[0,1]\to\Reals$ (satisfying varying degree of assumptions). While two frameworks share some commonalities (e.g., having the conditional value-at-risk as its special case), there is a subtle yet important difference: the utility-based framework allows the relative weight of the samples to depend on the \textit{loss value} itself, while the quantile-based framework does not. In this sense, the OCE framework can be viewed as having a little more room for adaptation with respect to different distributions of loss. On the other hand, it is also true that the quantile-based framework covers some risk measures that are not describable via the OCE framework, e.g., risk measures trimming both samples with small and large loss values. An in-depth comparative study on the theoretical and empirical benefits of two frameworks may be an interesting direction of future study.

\textbf{Connections to algorithmic fairness.} In \cite{williamson19}, Williamson and Menon give an axiomatic definition of \textit{fairness risk measures} for group fairness. In particular, they argue that the fairness risk measure should be convex, positively homogeneous, monotonic, lower semi-continuous, translation invariant, averse, and law invariant. From the axioms, the authors propose a fairness-aware objective based on minimizing the conditional value-at-risk, which is a special case of the OCE risk with the disutility function $\phi(t) = \frac{1}{\alpha}[t]_+$. Indeed, the conditional value-at-risk can be simply viewed as a solution of
\begin{align}
    \max_{\cU \subset \cZ \atop P(\cU) = \alpha} \E_P[ f(Z) |~Z \in \cU]
\end{align}
(given that $P$ has a density on $\cZ$), which is the \textit{worst-case} subgroup error over all subgroups of fraction $\alpha$. In another concurrent work \cite{li20}, Li et al. also empirically observe that minimizing the entropic risk (instead of the expected loss) mitigates the disparate impact of the learned hypothesis on subgroups.

\end{document}